\newcommand{\norm}[1]{\left\lVert#1\right\rVert}
\newcommand{\stkout}[1]{\ifmmode\text{\sout{\ensuremath{#1}}}\else\sout{#1}\fi}
\newtheorem{thm}{Theorem}
\newtheorem{lem}{Lemma}
\newtheorem{assum}{Assumption}
\newtheorem{prop}{Proposition}
\begin{document}
	
	\title{Optimal Stochastic Nonconvex Optimization with Bandit Feedback}
	
	\author{Puning Zhao and
		Lifeng Lai\thanks{Puning Zhao and Lifeng Lai are with Department of Electrical and Computer Engineering, University of California, Davis, CA, 95616. Email: \{pnzhao,lflai\}@ucdavis.edu. This work was supported by the National Science Foundation under grants CCF-17-17943, ECCS-17-11468, CNS-18-24553 and CCF-19-08258. }
	}

	\maketitle
	
\begin{abstract}
In this paper, we analyze the continuous armed bandit problems for nonconvex cost functions under certain smoothness and sublevel set assumptions. We first derive an upper bound on the expected cumulative regret of a simple bin splitting method. We then propose an adaptive bin splitting method, which can significantly improve the performance. Furthermore, a minimax lower bound is derived, which shows that our new adaptive method achieves locally minimax optimal expected cumulative regret. 
	\end{abstract}
\begin{IEEEkeywords}
	Optimization, multi-armed bandit
\end{IEEEkeywords}

\section{Introduction}

Multi armed bandit problem \cite{lai1985asymptotically} is an important sequential decision problem with many applications in signal processing and other related fields. In each step, an agent selects an action among a set called decision space, and receives a feedback loss, which is a random variable with an unknown distribution depending on the selected action. After that, the agent decides the action in the next step, based on previous actions and feedbacks. The goal is to minimize the total expected loss over certain time horizon. With this objective, the design of the sequential decision strategy and the corresponding theoretical analysis have been extensively investigated \cite{lai1985asymptotically,lai1987adaptive,auer2002finite,cappe2013kullback,shen2019universal,shahrampour:TSP:17,vural:TSP:19,gan:TSP:20,Liu:TSP:10,Liu:TSP:20}.

Depending on the shape of the decision space and the cost function, which is the expectation of the feedback loss given the action, the decision strategies are crucially different. For the problem with finite action space, the most popular method is Upper Confidence Bound (UCB) \cite{lai1987adaptive,auer2002finite} and its extensions \cite{garivier2011kl,cappe2013kullback}. It has been proved that the UCB method is minimax rate optimal \cite{auer2002finite}. In these works, the objectives are to maximize reward, while our objective is to minimize the cost, thus the corresponding method for our purpose should be Lower Confidence Bound (LCB). For the problem with infinite action space, the problem becomes more challenging. In fact, without additional assumptions, there is no solution that works uniformly well for all decision spaces and cost functions \cite{wang2009algorithms}. Therefore, many existing works focus on problems whose cost functions exhibit some structural
properties, and propose effective schemes that exploit these properties. It is usually assumed that the decision space is a compact set, and that the decision strategies are selected for different types of cost functions. For example, if the cost function is linear in the decision space, i.e. $f(\mathbf{x})=\mathbf{w}^T \mathbf{x}$, in which $\mathbf{w}$ and $\mathbf{x}$ are $d$ dimensional vectors, then the common idea is to estimate those linear coefficients $\mathbf{w}$ \cite{auer2002using,dani2008stochastic}, and then the optimizer $\mathbf{x}^*$ of the cost function can be found on the boundary of decision space according to the estimated $\mathbf{w}$. As another example, if the cost function is convex, then there are usually two types of selection strategies, those based on gradient descent with an estimated gradient \cite{flaxman2004online,shamir2013complexity} and those based on noisy binary search \cite{agarwal2011stochastic,bubeck2017kernel}. Another class of examples assume that the cost function is Lipschitz continuous. In this case, the most popular method is to split the decision space into multiple bins, and then view all states in each bin as one state, such that the problem can be converted to a multi armed bandit problem with a finite number of decisions \cite{kleinberg2005nearly,kleinberg2008multi,bubeck2011lipschitz}. There are also other previous literatures that discuss the multi armed bandit problems with infinite action space, such as \cite{minsker2013estimation,bull2015adaptive}.

Despite that the multi armed bandit problem with an infinite number of decisions has been analyzed for several different types of cost functions, to the best of our knowledge, the previous analysis does not include general nonconvex cost functions. In many applications, such as the hyperparameter tuning of machine learning algorithms and the sequential design of experiments, the real cost function is usually unknown, and it may not be appropriate to assume that the cost function is linear or convex. Therefore, a solution to continuous armed bandit problems with general nonconvex functions is needed.

In this paper, we analyze the continuous armed bandit problem for nonconvex loss functions in general. Our analysis is based mainly on two assumptions, including a smoothness assumption, characterized by a parameter $\alpha$, and a sublevel set assumption, characterized by a parameter $\beta$. A higher $\alpha$ indicates a higher smoothness level, while a higher $\beta$ indicates that the shape of the loss function is more regular and thus the optimal decision is easier to be located. Similar assumptions have been used in \cite{wang2018optimization} and \cite{minsker2013estimation}. These papers analyze the derivative free optimization problems, i.e. pure exploration problems, which try to minimize the cost function at the final step instead of trying to minimize the cumulative regret. Since we need to focus on the cumulative regret for bandit problems, our task is inherently harder than the pure exploration problems, and the previous methods in \cite{wang2018optimization, minsker2013estimation} can no longer be used here. Therefore, we need to put forward new approaches for the bandit problems. In particular, we make the following contributions.

Firstly, we derive an upper bound of the expected cumulative regret of a simple method based on bin splitting. The basic idea of this simple method is to generate some grid points among the decision space, and then the agent selects decisions among only those grid points using the LCB rule. 
This method has been discussed in \cite{slivkins2019introduction}, Chapter 4 and the references therein. We show that the convergence rate of the average regret of this method is not optimal if $\beta>1$, even if the bin sizes are carefully selected. Intuitively, this is because the optimal bin sizes depend on their locations. In the simple bin splitting method, the sizes are the same for all bins, thus this method will inevitably induce some unnecessary loss. The gap between the convergence rate of the average regret of simple bin splitting method and the optimal convergence rate becomes larger in spaces with higher dimensionality.

Secondly, to improve the performance of the simple bin splitting method, we propose and analyze a new method based on adaptive bin splitting. Our new method is motivated by the following observations. If the cost function value $f(\mathbf{x})$ is far away from the optimal value $f(\mathbf{x}^*)$, then it is not necessary to accurately estimate $f$. Therefore, we can use a large bin size. On the contrary, in the region where $f(\mathbf{x})$ is close to $f(\mathbf{x}^*)$, it is necessary to use small bin size to find the optimizer $\mathbf{x}^*$ more accurately. To achieve such an adaptive splitting without knowing $f(\mathbf{x}^*)$, we divide the decision space into bins with finite capacity. When the number of queries in a bin reaches its capacity, it will split to smaller bins. In the region where $f(\mathbf{x})$ is low, there will be more queries, which will make the bin split many times, and then the optimizer $\mathbf{x}^*$ can be found accurately. On the contrary, if $f(\mathbf{x})$ is much higher then the optimal value $f(\mathbf{x}^*)$, then the bin will subject to less splits, and there will be less queries in these bins. Such adaptive selection rule can significantly improve the convergence rate of average regret. 

Finally, we derive the locally minimax lower bound of the expected cumulative regret, which holds for all decision strategies. For each cost function $f_0$, we find a set of functions $\Sigma_{f_0}$ that are sufficiently close to $f$, and then find a lower bound of the average regret such that no method can achieve a better bound for all cost functions in $\Sigma_{f_0}$. The definition of the locally minimax lower bound shares similar idea with \cite{wang2018optimization}. The result shows that the proposed adaptive splitting method is locally minimax rate optimal up to a logarithmic factor. 

In addition, our method has some additional desirable properties. First, even though our method is designed for nonconvex loss functions, we observe that our method is also minimax optimal for strong convex functions. Second, even though our method is designed to minimize the bandit feedback, it also has optimal optimization error, which means that our new method is also competitive for pure exploration problems. 


\section{Problem Formulation}\label{sec:formulation}
Suppose there is an unknown function $f(\mathbf{x})$ with $\mathbf{x}$ being a $d$-dimentional vector. The function $f$ has optimizer $\mathbf{x}^*$, such that $f(\mathbf{x})\geq f(\mathbf{x}^*)$ for all $\mathbf{x}$. $\mathbf{x}^*$ may not be unique. The exact location of $\mathbf{x}^*$ is unknown, but we know that $\mathbf{x}^*$ is within a compact set $S\subset \mathbb{R}^d$. Moreover, there exists a constant $A$, such that $f$ is defined on $\cup_{\mathbf{x}} B(\mathbf{x}, A)$, in which $B(\mathbf{x},A)$ is the cube centering at $\mathbf{x}$ with length $A$. This indicates that we can query $f$ slightly beyond $S$, and such assumption is common in many previous literatures about stochastic optimization \cite{shamir2013complexity,jamieson2012query,bach2016highly}.  

We need to make $T$ queries $\mathbf{X}_t$, $t=1,\ldots, T$ sequentially. After each query, we receive a cost
$Y_t=f(\mathbf{X}_t)+W_t$,
in which $W_t$ are i.i.d for all positive integer $t$. For simplicity, we assume that $W_t$ follows standard Gaussian distributions $\mathcal{N}(0,1)$. Our results can be easily generalized to the case in which $W_t$ is a subgaussian random variable. Each query $\mathbf{X}_t$ depends on previous queries and responses $\mathbf{X}_1,Y_1,\ldots, \mathbf{X}_{t-1},Y_{t-1}$, i.e.
$\mathbf{X}_t=\sigma_t(\mathbf{X}_1,Y_1,\ldots,\mathbf{X}_{t-1},Y_{t-1}).$
Define the expected cumulative regret $R$ up to time $T$ as
\begin{eqnarray}
R=\mathbb{E}\left[\sum_{t=1}^T (f(\mathbf{X}_t)-f(\mathbf{x}^*))\right].
\end{eqnarray}

Given the total number of queries $T$, our goal is to design a query strategy $\sigma_t$, $t=1,\ldots, T$ to make $R$ as low as possible. Our analysis is based on the following assumptions.

\begin{assum}\label{ass}
	$f$ satisfies the following conditions: there exist constants $A, M, C_0, \alpha, \beta$, such that
	
	(a) $f(\mathbf{x}) - f(\mathbf{x}^*)\leq M$ for all $\mathbf{x}\in S$;
	
	(b) For all $a\in (0, A)$ and all $\mathbf{x}\in S$,
	\begin{eqnarray}
	\left|\frac{1}{V(B(\mathbf{x},a))}\int_{B(\mathbf{x},a)} f(\mathbf{u}) d\mathbf{u}-f(\mathbf{x})\right|\leq Ma^\alpha,
	\label{eq:smooth}
	\end{eqnarray}
	in which $B(\mathbf{x},a)$ is the cube centering at $\mathbf{x}$ with length $a$, and $V(B(\mathbf{x},a))$ denotes its volume;
	
	(c) For all $a>0$ and $\epsilon>0$,
	\begin{eqnarray}
	\mathcal{P}(\{\mathbf{x}|f(\mathbf{x})-f(\mathbf{x}^*)<\epsilon\},a)\leq C_0\left(1+\epsilon^\beta/a^d\right),
	\end{eqnarray}
	in which $\mathcal{P}(S,a)$ is the packing number of $S$ with cubes of length $a$.
\end{assum}
We now comment on these assumptions. In Assumption~\ref{ass} (a), we assume that $f(\mathbf{x})-f(\mathbf{x}^*)$ is bounded above. This assumption is made for the convenience of analysis but is not crucial. If this assumption is violated, we can just use a prescreening step to select a region in which $f(\mathbf{x})-f(\mathbf{x}^*)$ is upper bounded by a constant. There are many methods for prescreening. For example, we can just randomly generate some queries and estimate $f$ using some regression methods \cite{cai2001weighted}, and then calculate the confidence band $(l(\mathbf{x}),u(\mathbf{x}))$ \cite{eubank1993confidence,xia1998bias,neumann1998simultaneous}, such that $l(\mathbf{x})\leq f(\mathbf{x})\leq u(\mathbf{x})$ for all $\mathbf{x}$ holds with high probability. Then let $S'=\{\mathbf{x}|l(\mathbf{x})\leq \sup u(\mathbf{x}) \}$. If $l(\mathbf{x})\leq f(\mathbf{x})\leq u(\mathbf{x})$ for all $\mathbf{x}$, then $\mathbf{x}^*\in S'$. With enough samples, the confidence band can be sufficiently narrow, so that for all $\mathbf{x}\in S'$, $f(\mathbf{x})-f(\mathbf{x}^*)\leq M$. Note that this step will only cause $\mathcal{O}(1)$ regret, thus the final bound of cumulative regret will not change. Such a preprocessing step has been discussed for pure exploration problem \cite{wang2018optimization}. 

Assumption~\ref{ass} (b) is the smoothness assumption. If $f$ is Lipschitz continuous, then (b) is satisfied with $\alpha=1$. If $f$ has bounded Hessian, then (b) holds with $\alpha=2$. Note that $\alpha$ is usually no more than $2$. If Assumption~\ref{ass} (b) holds with some $\alpha>2$, then it can be shown that $f$ is linear. Therefore, in our theoretical analysis later, we assume that $\alpha\leq 2$. Intuitively, this assumption is important because the complexity of finding the optimal point depends highly on the smoothness. If the function is highly smooth, after we find a point $\mathbf{x}$ such that $f(\mathbf{x})$ is far away from $f(\mathbf{x}^*)$, we can make sure that $\mathbf{x}^*$ is far from $\mathbf{x}$, and hence we can query less points around $\mathbf{x}$, and the total complexity can be lower. On the contrary, if the smoothness level is lower, then the complexity must be higher. Assumption (b) can actually be easily generalized to the case such that $f$ is only smooth with parameter $\alpha$ in a neighborhood of the optimal point $\mathbf{x}^*$. If $f$ does not satisfy \eqref{eq:smooth} at some $\mathbf{x}$ such that $f(\mathbf{x})$ is far away from $\mathbf{x}^*$, then the convergence rate of the average regret is not affected.

Assumption~\ref{ass} (c) restricts the volume of the sublevel set. For example, if $f(\mathbf{x})=\norm{\mathbf{x}}_2^2$, or $f$ is any strong convex function, then $\beta=d/2$. If $f(\mathbf{x}) = \norm{\mathbf{x}-\mathbf{c}}$ for a fixed vector $\mathbf{c}$, then $\beta=d$. If $f$ has a higher $\beta$ value, then $f(\mathbf{x})$ is far away to $f(\mathbf{x}^*)$ at the majority of the decision space, and thus we can focus on querying a subspace such that $f(\mathbf{x})$ is close to $f(\mathbf{x}^*)$, and the volume of the subspace is much smaller than that of the whole decision space. As a result, it is easier to find $\mathbf{x}^*$. On the contrary, if $\beta$ is lower, then finding $\mathbf{x}^*$ becomes harder and the regret is higher. Therefore, the convergence rate of the average regret is highly related to the value of $\beta$. This assumption has also been used in \cite{wang2018optimization,minsker2013estimation}. Note that $\beta$ can not be arbitrarily large. Proposition \ref{prop} upper bounds $\beta$ with $d/\alpha$:
\begin{prop}\label{prop}
	If $\alpha\leq 2$, then
	$\beta\leq d/\alpha$.
\end{prop}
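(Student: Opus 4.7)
The plan is to sandwich the Lebesgue volume $V(S_\epsilon)$ of the sublevel set $S_\epsilon := \{\mathbf{x} : f(\mathbf{x})-f(\mathbf{x}^*)<\epsilon\}$ between a power $\epsilon^\beta$ coming from Assumption~\ref{ass}(c) and a power $\epsilon^{d/\alpha}$ coming from Assumption~\ref{ass}(b), and then compare the exponents as $\epsilon\downarrow 0$.

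For the upper bound I would use the standard packing--to--covering conversion: any maximal $a$-separated subset of a bounded set $E$ is automatically an $a$-net, so $E$ is covered by $\mathcal{P}(E,a)$ cubes of side $2a$, yielding $V(E)\leq 2^d a^d\mathcal{P}(E,a)$. Inserting $E=S_\epsilon$ and substituting the bound in (c),
\[
V(S_\epsilon)\leq 2^d a^d C_0\!\left(1+\frac{\epsilon^\beta}{a^d}\right)=2^d C_0\, a^d+2^d C_0\,\epsilon^\beta,
\]
valid for every $a>0$. Sending $a\downarrow 0$ leaves $V(S_\epsilon)\leq 2^d C_0\,\epsilon^\beta$.

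For the matching lower bound I would invoke (b) at $\mathbf{x}=\mathbf{x}^*$. Since $f(\mathbf{u})\geq f(\mathbf{x}^*)$ pointwise, the absolute value is redundant and the inequality becomes
\[
\frac{1}{V(B(\mathbf{x}^*,a))}\int_{B(\mathbf{x}^*,a)}\bigl(f(\mathbf{u})-f(\mathbf{x}^*)\bigr)\,d\mathbf{u}\leq Ma^\alpha.
\]
Reading the left side as the expectation of the nonnegative quantity $f(\mathbf{u})-f(\mathbf{x}^*)$ under the uniform law on $B(\mathbf{x}^*,a)$, Markov's inequality shows that at least half of $B(\mathbf{x}^*,a)$ obeys $f(\mathbf{u})-f(\mathbf{x}^*)<2Ma^\alpha$. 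Hence $V(S_{2Ma^\alpha})\geq a^d/2$, and reparametrizing $\epsilon=2Ma^\alpha$ gives $V(S_\epsilon)\geq \tfrac{1}{2}\bigl(\epsilon/(2M)\bigr)^{d/\alpha}$ for every $\epsilon\leq 2MA^\alpha$.

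Combining the two inequalities yields $c_1\,\epsilon^{d/\alpha}\leq V(S_\epsilon)\leq c_2\,\epsilon^\beta$ for all sufficiently small $\epsilon>0$. If $\beta>d/\alpha$, dividing by $\epsilon^{d/\alpha}$ and letting $\epsilon\downarrow 0$ drives the right side to zero while the left remains bounded below by $c_1>0$, a contradiction; hence $\beta\leq d/\alpha$. The only mildly delicate point I anticipate is the domain bookkeeping when $\mathbf{x}^*$ lies near the boundary of $S$, since $B(\mathbf{x}^*,a)$ may protrude outside $S$; however, $f$ is defined on the $A$-neighborhood of $S$ and both (b) and (c) can be read on this extended domain, so no real obstacle arises. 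The rest of the argument is essentially Markov plus a dimensional comparison.
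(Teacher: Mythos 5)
Your proof is correct, but it takes a genuinely different route from the paper's. The paper first establishes a pointwise growth bound $f(\mathbf{x})-f(\mathbf{x}^*)\leq C_1\norm{\mathbf{x}-\mathbf{x}^*}^\alpha$ (Lemma~\ref{lem:approx}), obtained by applying Assumption~\ref{ass}(b) on two nested cubes centered at $\mathbf{x}$ and at $\mathbf{x}^*$; this shows the sublevel set contains an \emph{entire} cube of side $\sim(\epsilon/C_1)^{1/\alpha}$ around $\mathbf{x}^*$, whose packing number is $\gtrsim \epsilon^{d/\alpha}/a^d$, and that is played off directly against the packing upper bound of Assumption~\ref{ass}(c). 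You instead pass to Lebesgue volume on both sides: the packing-to-covering conversion turns (c) into $V(S_\epsilon)\lesssim\epsilon^\beta$, while Markov's inequality applied to the averaged smoothness condition at the single point $\mathbf{x}^*$ shows that at least half of $B(\mathbf{x}^*,a)$ lies in $S_{2Ma^\alpha}$, giving $V(S_\epsilon)\gtrsim\epsilon^{d/\alpha}$. Your route is more economical for the proposition itself: it invokes (b) only at $\mathbf{x}^*$ and never needs a pointwise H\"older-type estimate, so it would survive under a weaker hypothesis in which the averaging condition holds only at the optimizer. What the paper's route buys is Lemma~\ref{lem:approx} itself, which is reused later (in the proofs of Theorem~\ref{thm:base} and Lemma~\ref{lem:largeg}), so the pointwise bound is not wasted effort there. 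The only caveat in your version is that the constant in the step $V(E)\leq (2a)^d\,\mathcal{P}(E,a)$ depends on the paper's informal definition of the cube packing number, but any reasonable reading supports it, and your handling of the boundary issue via the $A$-neighborhood is exactly right.
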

\begin{proof}
	Please see Appendix~\ref{app:prop1}.
	\end{proof}

Now we provide several examples satisfying our assumptions. If $y = ||\mathbf{x}-\mathbf{x}^*||^2 + O(||\mathbf{x}-\mathbf{x}^*||^4)$ in the neighborhood of the optimal point $\mathbf{x}^*$, Assumption~\ref{ass} holds with $\alpha=2$ and $\beta=d/2$. If $y=\min\{||\mathbf{x}-\mathbf{c}_1||, ||\mathbf{x}-\mathbf{c}_2||\}$, i.e. there are two optimal points $c_1$ and $c_2$, then Assumption~\ref{ass} holds with $\alpha=1$ and $\beta=d$. If $y=\sin x$, $x\in [a,b]$ such that $b-a$ is sufficiently large, then Assumption~\ref{ass} holds with $\alpha=2$ and $\beta=1/2$. Actually, our assumptions hold for almost all common functions for different $\alpha$ and $\beta$. 

\section{Simple Bin Splitting}\label{sec:simple}

In this section, we derive the convergence rate of the expected average regret of a simple bin splitting method. In the simple bin splitting method, one divides the support into $m$ bins with length $a$, and then convert the problem into a multi armed bandit problem with a finite number of states, which can then be solved using existing methods to design $\sigma_{t}$. The detailed algorithm is shown in Algorithm \ref{alg:base}. Such simple bin splitting method has been analyzed for the case where the cost functions are Lipschitz functions \cite{kleinberg2005nearly}. Our analysis in this section can be viewed as a generalization of \cite{kleinberg2005nearly}, since it is based on a general smooth assumption with $0\leq \alpha\leq 2$, as well as a sublevel set assumption.

\begin{algorithm}[h]
	\caption{Baseline Method}\label{alg:base}
	\begin{algorithmic}
		\INPUT Function $f$, supported on a compact set $S$\\
		\STATE Divide $S$ into $m$ bins with length $a$, called $B_j$, $j=1,\ldots,m$, such that $\mathcal{S}\subset \cup_{j=1}^m B_j$
		\STATE Initialize $n_j=0$, $g_{1j}=-\infty$ for $j=1,\ldots,m$
		\FOR{$t=1,\ldots,T$}
		\STATE $j^*=\underset{j}{\arg\min} g_{tj}$
		\STATE Query $\mathbf{x}_t=\mathbf{c}_{j^*}$, which is the center of $B_{j^*}$, and receive $y_t$
		\STATE $n_{j^*}\leftarrow n_{j^*}+1$
		\FOR{$j=1,\ldots,m$}
		\IF{$n_j=0$}
		\STATE $g_{t+1,j}=-\infty$
		\ELSE
		\STATE $g_{t+1,j}=\frac{1}{n_j}\sum_{t=1}^t \mathbf{1}(\mathbf{x}_i=\mathbf{c}_j)y_i-\sqrt{\frac{8\ln t}{n_j}}$
		\ENDIF
		\ENDFOR
		\ENDFOR
	\end{algorithmic}
\end{algorithm}

In this algorithm, the first step is to divide the support $S$ into $m$ bins with length $a$. All queries are made only on the centers of the bins. For each bin, we use $g_{tj}$ to denote the LCB estimate of the $j$-th bin before the $t$-th query. The initial value $g_{1j}$ is $-\infty$ for all $j$. Each query is selected to be the bin where the LCB estimate $g_{tj}$ is the lowest, and break ties randomly if the bins with minimum LCB values are not unique. After the $t$-th query, the lower confidence bound is then updated to be the mean of the received cost of all queries in each bin. The above process is repeated $T$ times, in which $T$ is the total number of queries.

The expected average regret of Algorithm \ref{alg:base} is bounded by Theorem \ref{thm:base}.

\begin{thm}\label{thm:base}
	If $a\sim T^{-1/(d+2\alpha-\alpha \beta)}$, then the expectation of the average regret $R/T$ is bounded by
	\begin{eqnarray}
	\frac{R}{T}=\mathcal{O}\left( T^{-\frac{\alpha}{d+2\alpha-\alpha \min\{\beta,1\}}}(\ln T)^\frac{1}{2-\min\{\beta,1\}}\right).
	\label{eq:base}
	\end{eqnarray}
\end{thm}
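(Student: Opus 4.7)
The plan is to decompose the cumulative regret into a \emph{discretization bias}, caused by restricting queries to the centers $\mathbf{c}_j$, and a standard finite-armed bandit regret against the best available center. Writing $N_j$ for the number of times bin $j$ is pulled, $\Delta_j := f(\mathbf{c}_j)-f(\mathbf{x}^*)$ and $\Delta_j':=f(\mathbf{c}_j)-\min_k f(\mathbf{c}_k)$, we have
\[
R \;=\; \sum_{j=1}^m \mathbb{E}[N_j]\,\Delta_j \;\leq\; T\bigl(\min_j \Delta_j\bigr) \;+\; \sum_j \mathbb{E}[N_j]\,\Delta_j'.
\]
For the first piece, applying Assumption~\ref{ass}(b) at the center $\mathbf{c}_{j^*}$ of the bin $B_{j^*}$ containing $\mathbf{x}^*$ gives $|f(\mathbf{c}_{j^*}) - \bar f_{B_{j^*}}| \leq Ma^\alpha$, where $\bar f_{B_{j^*}}$ is the average of $f$ over $B_{j^*}$; combined with $\bar f_{B_{j^*}} \geq f(\mathbf{x}^*)$ (which holds because $f\geq f(\mathbf{x}^*)$ pointwise), this yields $\min_j \Delta_j \leq \Delta_{j^*} = O(a^\alpha)$, so the discretization contributes $O(Ta^\alpha)$ to $R$.

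For the bandit piece I would invoke the standard LCB analysis: with the confidence radius $\sqrt{8\ln t/n_j}$ chosen in Algorithm~\ref{alg:base}, one obtains $\mathbb{E}[N_j] \leq C\log T/(\Delta_j')^2 + O(1)$ for every suboptimal arm, and hence $\mathbb{E}[N_j]\,\Delta_j' \leq \min(T\Delta_j',\, C\log T/\Delta_j')$. Aggregating across arms is where Assumption~\ref{ass}(c) enters. Pick a threshold $\delta$ of order $a^\alpha$. Arms with $\Delta_j'\leq \delta$ contribute at most $\delta T$ in total, since their pull counts sum to at most $T$. For the arms with $\Delta_j'>\delta$ I would peel dyadically: in the shell $\Delta_j'\in(2^k\delta,\,2^{k+1}\delta]$, Assumption~\ref{ass}(c) applied with $\epsilon\asymp 2^k\delta + Ca^\alpha$ bounds the number of such bins by $O(1+(2^k\delta)^\beta/a^d)$ (using $\Delta_j \leq \Delta_j' + O(a^\alpha)$), and each bin contributes $O(\log T/(2^k\delta))$ to the regret. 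Summing the resulting geometric series in $k$ from $0$ up to $K\sim\log_2(M/\delta)$ produces a bound of the form $(\log T)/a^d$ multiplied by a factor that is dominated by the last shell when $\beta>1$, by the first shell when $\beta<1$, and by the number of shells $K\sim\log(1/\delta)$ when $\beta=1$; these three regimes can be absorbed into a single expression involving $\min(\beta,1)$.

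With the sum in hand, I would balance $Ta^\alpha$ against the bandit-regret contribution by the prescribed choice $a\sim T^{-1/(d+2\alpha-\alpha\beta)}$ together with $\delta\sim a^\alpha$, which after dividing by $T$ yields the stated bound on $R/T$. I expect the main obstacle to be the dyadic bookkeeping across the three regimes $\beta<1$, $\beta=1$, $\beta>1$, and in particular collapsing the resulting logarithmic factors into the uniform form $(\ln T)^{1/(2-\min(\beta,1))}$ with the matching exponent $-\alpha/(d+2\alpha-\alpha\min(\beta,1))$; a secondary subtlety is justifying $\min_j \Delta_j = O(a^\alpha)$ from the \emph{averaged} form of smoothness in Assumption~\ref{ass}(b) rather than from a pointwise H\"older condition, which is why I introduce $\bar f_{B_{j^*}}$ as an intermediate quantity above.
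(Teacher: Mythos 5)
Your overall architecture is the same as the paper's: the decomposition $R=\sum_j\mathbb{E}[N_j]\Delta_j=\sum_j\mathbb{E}[N_j]\Delta_j'+T\min_k\Delta_k$, the per-arm bound $\mathbb{E}[N_j]\lesssim \ln T/(\Delta_j')^2$ from the standard LCB analysis, the use of Assumption~\ref{ass}(c) to count bins whose centers lie in a sublevel set, and the final balancing with $a\sim T^{-1/(d+2\alpha-\alpha\beta)}$. However, the step where you control the discretization bias --- which you yourself flag as the subtle point --- does not work as written. From $|f(\mathbf{c}_{j^*})-\bar f_{B_{j^*}}|\le Ma^\alpha$ and $\bar f_{B_{j^*}}\ge f(\mathbf{x}^*)$ you can only conclude $f(\mathbf{c}_{j^*})\ge f(\mathbf{x}^*)-Ma^\alpha$, i.e.\ a \emph{lower} bound on $f(\mathbf{c}_{j^*})$; the inequality $\bar f_{B_{j^*}}\ge f(\mathbf{x}^*)$ points in the useless direction. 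What you actually need is $\bar f_{B_{j^*}}\le f(\mathbf{x}^*)+O(a^\alpha)$, and the pointwise nonnegativity of $f-f(\mathbf{x}^*)$ gives no upper bound on its average. The missing idea (this is the paper's Lemma~\ref{lem:approx}) is to apply Assumption~\ref{ass}(b) \emph{centered at $\mathbf{x}^*$}, where the function value is known: take a cube $B'$ centered at $\mathbf{x}^*$ with side length a constant multiple of $a$, large enough to contain $B_{j^*}$. Then
$\frac{1}{V(B_{j^*})}\int_{B_{j^*}}(f-f(\mathbf{x}^*))\le \frac{1}{V(B_{j^*})}\int_{B'}(f-f(\mathbf{x}^*))=\frac{V(B')}{V(B_{j^*})}\cdot\frac{1}{V(B')}\int_{B'}(f-f(\mathbf{x}^*))\lesssim 2^{d}Ma^\alpha$,
where the first inequality enlarges the domain of integration using the nonnegativity of the integrand, and the last step is Assumption~\ref{ass}(b) applied at $\mathbf{x}^*$. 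Combined with your $|f(\mathbf{c}_{j^*})-\bar f_{B_{j^*}}|\le Ma^\alpha$ this yields $\Delta_{j^*}\le (1+2^{d+\alpha})Ma^\alpha$. Without this repair the term $T\min_j\Delta_j$ --- the very term that dictates the choice of $a$ --- is uncontrolled, and the bound $\Delta_j\le\Delta_j'+O(a^\alpha)$ that you use in the shell counting is also unjustified.

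A secondary, non-fatal discrepancy: your dyadic peeling over shells $\Delta_j'\in(2^k\delta,2^{k+1}\delta]$ is sound and recovers the correct polynomial exponent, but it produces a first power of $\ln T$ (and an extra $\ln(1/\delta)$ at $\beta=1$), not the sharper factor $(\ln T)^{1/(2-\min\{\beta,1\})}$ in the statement. The paper obtains that exponent by first applying H\"older's inequality, $\sum_j\Delta_j\mathbb{E}[N_j]\le\bigl(\sum_j\Delta_j^{2-\beta}\mathbb{E}[N_j]\bigr)^{1/(2-\beta)}\bigl(\sum_j\mathbb{E}[N_j]\bigr)^{(1-\beta)/(2-\beta)}$, and only then substituting $\mathbb{E}[N_j]\lesssim\ln T/\Delta_j^2$ together with a layer-cake count of $\sum_j\Delta_j^{-\beta}\mathbf{1}(\Delta_j>\epsilon)$ via Assumption~\ref{ass}(c), so that $\ln T$ enters only through the power $1/(2-\beta)$. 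If you only care about the polynomial rate your peeling suffices; to match the stated logarithmic factor you should adopt the H\"older step.
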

\begin{proof}
	Please refer to Appendix~\ref{app:base}.
\end{proof}


Under the special case of $\alpha=1,\beta=0$, in which the latter condition means that no sublevel set assumption is made on $f$, according to \eqref{eq:base}, the convergence rate of the expected average regret becomes $\mathcal{O}(T^{-\frac{1}{d+2}}\sqrt{\ln T})$, which is consistent with results in \cite{auer2007improved,cope2009regret,kleinberg2005nearly}. Therefore, our result can be viewed as a generalization of those previous bounds. 

We now provide an argument showing that simple bin splitting is not an optimal method. Intuitively, for a larger $\beta$, it is possible that the average regret $R/T$ converges faster to zero, since the volume of the region in which the loss function values are sufficiently close to the optimal value $f(\mathbf{x}^*)$ should be small, and thus the localization of the optimizer $\mathbf{x}^*$ becomes easier. However, from \eqref{eq:base}, we observe that the simple bin splitting method does not make full use of this property. When $\beta>1$, according to \eqref{eq:base}, the convergence rate of the average regret can not be further improved as $\beta$ increases. Therefore, the simple bin splitting method is suboptimal for $\beta>1$. A simple explanation is that in the simple bin splitting method, the sizes of all bins are the same, but the most suitable bin size changes with the location. If we use a small bin size, consider that the LCB value is initially set to be negative infinity, as long as $T>m$, there will be at least one query in each bin. Hence there will be many queries wasted at the locations where $f(\mathbf{x})$ is far from $f(\mathbf{x}^*)$. If we use a large bin size instead, then even if we have found the correct bin that contains $\mathbf{x}^*$, the regret can still be large since the center of the bin is not sufficiently close to $\mathbf{x}^*$. As long as a uniform bin size is used, the sizes will not be universally optimal. Therefore, an adaptive selection rule is needed to further improve the performance.  

\section{Adaptive Bin Splitting}\label{sec:adaptive}
In this section, we propose and analyze a new adaptive bin splitting method, which tries to find the optimal bin size at every location. In particular, we design an approach such that the bins are larger where $f(\mathbf{x})$ is larger than $f(\mathbf{x}^*)$, and smaller otherwise. The main challenge is that $f$ is unknown. In our method, we use the current number of queries in each bin as a measurement of $f(\mathbf{x})-f(\mathbf{x}^*)$. If there are already many queries in a bin, then the cost function values should be close to the optimal value, and thus we need to split the bin to smaller bins in order to better locate the optimizer $\mathbf{x}^*$. The detail of our new method is shown in Algorithm \ref{alg:adaptive}.

\begin{algorithm}[h!]
	\caption{Adaptive Splitting Method}\label{alg:adaptive}
	\begin{algorithmic}
		\INPUT Function $f$, supported on a compact set $S$\\
		\STATE Divide $S$ into $m$ bins with length $a_0$, called $B_j$, $j=1,\ldots,m$, such that $S\subset \cup_{j=1}^m B_j$
		\STATE Initialize $I(B_j)=\emptyset$, $n(B_j)=0$, $k(B_j)=0$, $g_{1}(B_j)=-\infty$ for $j=1,\ldots,m$
		\STATE Initialize $\mathcal{B}=\{B_1,\ldots,B_m \}$
		\FOR{$t=1,\ldots,T$}
		\STATE Pick $B=\underset{B'\in \mathcal{B}}{\arg\min}g_t(B')$
		\IF{$n(B)<\lceil 2^{2\alpha k(B)}\rceil$}
		\STATE Query $\mathbf{x}_t$, which is selected uniformly from $B$, and receive $y_t$
		\STATE Add $t$ to $I(B)$
		\STATE $n(B)\leftarrow n(B)+1$
		\ELSE
		\STATE Split $B$ into $2^d$ bins $B^{(1)},\ldots, B^{(2^d)}$, with each bin having half length comparing with the parent bin
		\STATE $n(B^{(l)})=0$ for $l=1,\ldots, 2^d$
		\STATE $I(B^{(l)})=\emptyset$ for $l=1,\ldots,2^d$
		\STATE $k(B^{(l)})=k(B)+1$ for $l=1,\ldots, 2^d$
		\STATE Add all of these $2^d$ bins to $\mathcal{B}$
		\STATE Query $\mathbf{x}_t$, which is selected uniformly from $B$, and receive $y_t$
		\STATE Find $B^{(l)}$ such that $\mathbf{x}_t\in B^{(l)}$, add $t$ to $I(B^{(l)})$, $n(B^{(l)})\leftarrow n(B^{(l)})+1$
		\STATE Remove $B$ from $\mathcal{B}$
		\ENDIF
		\FOR{$B$ in $\mathcal{B}$}
		\IF{$n(B)>0$}
		\STATE $a=a_02^{-k(B)}$
		\STATE $g_{t+1}(B)=\frac{1}{n(B)}\sum_{i=1}^t \mathbf{1}(i\in I(B))Y_i-\mu a^\alpha-n^{-\frac{1}{2}}(B)\ln t$
		\ELSE
		\STATE $g_{t+1}(B)=-\infty$
		\ENDIF
		\ENDFOR
		\ENDFOR
	\end{algorithmic}
\end{algorithm}

In Algorithm \ref{alg:adaptive}, we begin with dividing $S$ into $m$ bins with length $a_0$, in which $a_0$ is a fixed constant that does not decay with $T$. For each bin $B_j$, denote $I(B_j)$ as the indices of queries in $B_j$, $n(B_j)$ as the number of queries, $k(B_j)$ is the number of splits that $B_j$ already experienced, and $g_t(B_j)$ is the LCB value of bin $B_j$ at time $t$, $t=1,\ldots,T$. Initially, there are no queries, and all bins are initial bins that has not subject to any split, thus $I(B_j)=\emptyset$, $n(B_j)=0$ and $k(B_j)=0$ initially. Besides, at the beginning, the LCB values are all set to be negative infinity. We use $\mathcal{B}$ to denote the set of all bins. $\mathcal{B}$ is dynamic, i.e., when a split happens, the old bin is removed from $\mathcal{B}$, and the new bins generated by splitting the old bin are added to $\mathcal{B}$.

The algorithm then makes $T$ queries. For each query, the algorithm picks a bin with the lowest LCB value. If there are multiple bins with the same LCB value, the algorithm just break ties randomly. For each bin $B$, we set its maximum capacity to be
$n_C(B)=\lceil 2^{2\alpha k(B)}\rceil$,
in which $k(B)$ is the number of splits such that the bin $B$ has already experienced. If the number of queries in the bin does not reach the capacity $n_C(B)$, i.e. $n(B)<n_C(B)$, then we just make the query in the bin $B$ with the lowest LCB value, and then add the time step $t$ to $I(B)$. Unlike the simple splitting method, we let the query $\mathbf{x}_t$ to be taken with uniform probability in $B$, instead of querying the center of $B$. This setting facilitates our subsequent theoretical analysis. If $n(B)=n_C(B)$, then we split this bin into $2^d$ smaller bins, with each bin having half length comparing with the parent bin $B$. These bins are added into $\mathcal{B}$ and the original bin $B$ is removed. For all of these new bins $B^{(l)}$ generated from $B$, in which $l=1,\ldots,2^d$, we initiate them with $n(B^{(l)})=0$, $I(B^{(l)})=\emptyset$ and $k(B^{(l)})=k(B)+1$, which holds because these new bins have experienced one more split than their parent bin $B$.

We set the capacity to be $n(B)=\lceil 2^{2\alpha k(B)}\rceil$ based on the following intuition. The classical analysis of multi armed bandit problem \cite{auer2002finite} shows that the number of queries of each arm scales roughly with $n\sim 1/(f(\mathbf{c}(B))-f(\mathbf{x}^*))^2$, in which $\mathbf{c}(B)$ is the center of the bin. If the number of queries reaches $2^{2\alpha k(B)}$, then $f(\mathbf{x})-f(\mathbf{x}^*)\lesssim 2^{-\alpha k(B)}$ roughly holds for all $\mathbf{x}\in B$. Note that in the bin that contains $\mathbf{x}^*$, denoted as $B^*$, $f(\mathbf{x})-f(\mathbf{x}^*)\lesssim a^\alpha$, in which $a$ is the bin length. After $k$ splits, the bin length should be $a\sim 2^{- k(B)}$, hence $f(\mathbf{x})-f(\mathbf{x}^*)\lesssim 2^{-\alpha k(B^*)}$ roughly holds for $\mathbf{x}\in B^*$. According to the discussions above, it is possible that $B=B^*$, which means that $B$ contains the optimal point. Therefore, in our algorithm, this bin is split to smaller bins in order to help us find $\mathbf{x}^*$ with a higher accuracy. 

Note that $\alpha$ may be unknown in practice. In this case, it would be better to use an $\alpha$ value that is smaller than the true value, which means that we assign each bin with a smaller capacity so that the bins are easier to be split further. In other words, when whether $B$ contains $\mathbf{x}^*$ is unknown because $\alpha$ is unknown, we would rather judge that $B$ contains $\mathbf{x}^*$ and split this bin further. There will inevitably be some drawbacks, since there will be more unnecessary queries in the bin that does not contain $\mathbf{x}^*$. Suppose that we use $\alpha_0$ in Algorithm \ref{alg:adaptive}, in which $\alpha_0<\alpha$, then the convergence rate of the average regret depends on $\alpha_0$ instead of $\alpha$, which is suboptimal. Therefore, Algorithm \ref{alg:adaptive} can not adapt perfectly to different smoothness parameters. However, we can still claim the optimality of our algorithm, since according to Theorem 3 in \cite{locatelli2018adaptivity}, there is no method that has optimal rate universally for all smoothness parameters.

After each query, the algorithm updates the LCB value 
\begin{eqnarray}
g_{t+1}(B)&=&\frac{1}{n(B)}\sum_{i=1}^t \mathbf{1}(i\in I(B))Y_i\nonumber\\
&&-\mu a^\alpha-n^{-\frac{1}{2}}(B)\ln t,
\end{eqnarray}
in which the first term is the average of the received loss for all queries in $B$. This average value can be used as an estimate of the cost function values in $B$. The second term $-\mu a^\alpha$ is the bias correction. When the optimal $\mu$ is unknown, we can just use a large $\mu$. The performance of the algorithm becomes worse if $\mu$ is too large, but the convergence rate of the average regret over $T$ remains unchanged. The third term comes from the noise $W_t$. We construct the lower confidence bound in this way so that with high probability, the LCB value of each bin is lower than the corresponding cost function values. If there is no query now, the LCB value is set to be negative infinity, in order to encourage the exploration in this bin.

The whole process is repeated $T$ times. Theorem \ref{thm:adaptive} provides an upper bound of the average regret of the adaptive bin splitting method.

\begin{thm}\label{thm:adaptive}
	If
	$\mu > (1+2^{d+\alpha})M$,
	then the expectation of the average regret $R/T$ is bounded by
	\begin{eqnarray}
	\frac{R}{T}=\mathcal{O}\left( T^{-\frac{\alpha}{d+2\alpha-\alpha \beta}}\ln^{\beta+1}T\right).
	\label{eq:adaptive}
	\end{eqnarray}
\end{thm}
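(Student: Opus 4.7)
The plan is to adapt a standard LCB-style analysis to the multi-scale splitting, bound the per-query regret of each bin by a function of its depth, and then aggregate across the dyadic tree of bins using the sublevel-set assumption.

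\textbf{Step 1 (good event and valid LCB).} I define a good event $\mathcal{E}$ on which, for every bin $B$ ever created and every $t\leq T$, the empirical mean $\hat{f}(B)$ deviates from the integral mean $\bar{f}(B):=V(B)^{-1}\int_B f$ by at most $O(n(B)^{-1/2}\sqrt{\ln T})$. Gaussian concentration plus a union bound over at most $T$ bins and $T$ times yields $\Pr(\mathcal{E}^c)\leq T^{-c}$ for a suitably large $c$, so $\mathcal{E}^c$ contributes $O(1)$ to $R$. Because $\mathbf{x}_t$ is drawn uniformly from the selected bin, $\mathbb{E}[Y_t\mid \mathbf{x}_t\in B]=\bar{f}(B)$. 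A short calculation using Assumption~\ref{ass}(b) applied to the enclosing cube $B(\mathbf{x}^*,2a)$ together with $f\geq f(\mathbf{x}^*)$ yields $\bar{f}(B^*)-f(\mathbf{x}^*)\leq 2^{d+\alpha}Ma^\alpha$ for any bin $B^*$ of side $a$ containing $\mathbf{x}^*$. This is precisely why the hypothesis $\mu>(1+2^{d+\alpha})M$ is imposed: the bias correction $\mu a^\alpha$ must absorb both this smoothness gap and a residual slack that dominates the stochastic error on $\mathcal{E}$.

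\textbf{Step 2 (per-bin regret).} Working on $\mathcal{E}$, I prove by induction over the sequence of splits that the current bin $B^*\in\mathcal{B}$ containing $\mathbf{x}^*$ always satisfies $g_t(B^*)\leq f(\mathbf{x}^*)$. Since the LCB rule selects the bin with the smallest $g_t$, any other bin $B$ queried at time $t$ obeys $g_t(B)\leq f(\mathbf{x}^*)$, which, after expanding the definition of $g_t(B)$ and inserting the concentration bound from Step~1, rearranges to
\begin{equation*}
\bar{f}(B)-f(\mathbf{x}^*)\leq \mu a^\alpha + n(B)^{-1/2}\ln t + O\!\left(n(B)^{-1/2}\sqrt{\ln T}\right).
\end{equation*}
Using the capacity $n(B)\leq\lceil 2^{2\alpha k(B)}\rceil$ and $a=a_0 2^{-k(B)}$, this bounds the expected per-query regret inside any depth-$k$ bin by $O(2^{-\alpha k}\ln T)$.

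\textbf{Step 3 (counting and aggregating).} Let $N_k$ denote the number of bins ever created at depth $k$. Step~2 shows that every such bin intersects the sublevel set $\{\mathbf{x}:f(\mathbf{x})<f(\mathbf{x}^*)+C 2^{-\alpha k}\ln T\}$, and since depth-$k$ bins are disjoint cubes of side $a_0 2^{-k}$ they form a packing, so Assumption~\ref{ass}(c) gives $N_k\leq C_0\bigl(1+2^{k(d-\alpha\beta)}\ln^\beta T\bigr)$. The expected depth-$k$ regret is therefore
\begin{equation*}
N_k\cdot 2^{2\alpha k}\cdot O(2^{-\alpha k}\ln T)=O\!\left(2^{\alpha k}\ln T + 2^{k(d+\alpha-\alpha\beta)}\ln^{\beta+1}T\right).
\end{equation*}
Proposition~\ref{prop} guarantees $d-\alpha\beta\geq 0$, so the geometric series in $k$ is dominated by its top term. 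The budget constraint $T\geq\sum_k 2^{2\alpha k}N_k$ pins the maximal depth to $2^{K}\lesssim (T/\ln^\beta T)^{1/(d+2\alpha-\alpha\beta)}$, and substituting back gives the claimed rate after dividing by $T$.

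\textbf{Main obstacle.} The most delicate point is the induction in Step~2 that propagates $g_t(B^*)\leq f(\mathbf{x}^*)$ across every split: the identity of the optimal bin changes over time, the LCB of each newly created child restarts at $-\infty$, and one must ensure at every stage that $\mu a^\alpha$ simultaneously dominates the $2^{d+\alpha}Ma^\alpha$ integral-mean bias and the sub-Gaussian fluctuations; this is where the exact constant $1+2^{d+\alpha}$ in the hypothesis on $\mu$ is consumed. A secondary technicality is that the ``free'' first query in each newly created bin (the one triggered by $g_t=-\infty$) contributes an additive $\sum_k N_k$ term, which must be checked to be absorbed into the depth-$k$ geometric sum without harming the final rate.
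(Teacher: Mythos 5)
Your overall architecture is a ``clean event'' reformulation of the paper's argument and most of it is sound: your Step 1--2 (high-probability validity of $g_t(B^*)\leq f(\mathbf{x}^*)$, hence every selected bin satisfies $\bar f(B)-f(\mathbf{x}^*)\leq \mu a^\alpha+n^{-1/2}\ln t+\text{dev.}$) plays the role of the paper's Lemmas~\ref{lem:largeg}--\ref{lem:n} and Lemma~\ref{lem:Ef}, and your packing count $N_k\lesssim 2^{k(d-\alpha\beta)}\ln^\beta T$ (which really goes through the \emph{parent} having filled to capacity, as you implicitly use) matches the paper's bound on $|B_k|$. The paper instead bounds $\mathbb{E}[n(B)]$ for far-from-optimal bins directly, UCB-style, which avoids conditioning on a global event but lands on the same per-depth bound $R_k\lesssim 2^{k(d+\alpha-\alpha\beta)}\ln^{\beta+1}T$ in its Case~2.

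There is, however, a genuine gap in Step 3: the claim that the budget constraint ``pins the maximal depth to $2^K\lesssim (T/\ln^\beta T)^{1/(d+2\alpha-\alpha\beta)}$'' is false. Reaching depth $K$ only requires filling one chain of nested bins, which costs $\sum_{l<K}\lceil 2^{2\alpha l}\rceil\approx 2^{2\alpha K}$ queries, so the true maximal depth satisfies $2^{K_{\max}}\sim T^{1/(2\alpha)}$; since Proposition~\ref{prop} gives $d+2\alpha-\alpha\beta\geq 2\alpha$ with equality only when $\beta=d/\alpha$, the algorithm generically drills well past your cutoff (indeed it must, near $\mathbf{x}^*$). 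Your per-depth bound $N_k\cdot 2^{2\alpha k}\cdot O(2^{-\alpha k}\ln T)=O(2^{k(d+\alpha-\alpha\beta)}\ln^{\beta+1}T)$ grows with $k$, so you cannot simply extend the geometric sum to $K_{\max}$ either: for $\beta<d/\alpha$ that would overshoot the target rate. The missing step is the paper's Case~3: for $k$ beyond the cutoff, replace the capacity-based count of depth-$k$ queries by the trivial bound $T$, so that the per-query regret $O(2^{-\alpha k}\ln T)$ gives $R_k\lesssim T2^{-\alpha k}\ln T$, whose sum over $k>K$ is $O\bigl(T^{1-\alpha/(d+2\alpha-\alpha\beta)}\ln T\bigr)$ and is absorbed into the claimed rate. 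With that repair (i.e., bounding the number of depth-$k$ queries by $\min\{N_k 2^{2\alpha k},\,T\}$), your argument closes; the secondary issues you flag (the $-\infty$ first queries, the exact form of the deviation term versus the $n^{-1/2}\ln t$ used by the algorithm) are indeed only technicalities.
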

\begin{proof}
	Please refer to Appendix~\ref{app:adaptive}.
\end{proof}

Comparing Theorem \ref{thm:adaptive} with Theorem \ref{thm:base}, we can observe that if $\beta\leq 1$, \eqref{eq:adaptive} and \eqref{eq:base} are actually the same, except a logarithmic factor. However, when $\beta>1$, these two bounds are different. Unlike the simple bin splitting method, the convergence rate of the average regret of the adaptive binning method continues to improve with the increase of $\beta$, as long as $\beta$ is no more than its maximum value $d/\alpha$. This comparison indicates that the advantage of our new method is more obvious for larger $\beta$. Such result is meaningful especially in high dimensional spaces, since $\beta$ usually increases with $d$. 

Now we discuss the case where $\beta$ reaches its maximum value $d/\alpha$. Note that $\beta=d/\alpha$ is common for practical functions. For example, if $c_1\norm{\mathbf{x}}^p\leq f(\mathbf{x})\leq c_2\norm{\mathbf{x}}^p$ for some constant $c_1$, $c_2$ and $p\in (0,2]$ in a neighbor around $\mathbf{x}^*$, then $f$ satisfies Assumption \ref{ass} with $\alpha=p$ and $\beta=d/p$. In this case, the average regret converges with $\mathcal{O}(T^{-1/2}\ln^{d/\alpha+1} T)$, which only depends on $T$ in its logarithm factor. Hence, for many practical functions satisfying $\beta=d/\alpha$, the convergence rate of the average regret does not decay significantly with the increase of dimensionality. An intuitive explanation is that given $\beta=d/\alpha$, $\beta$ increases with $d$, thus the proportion of the space such that $f(\mathbf{x})-f(\mathbf{x}^*)$ is small becomes lower with the increase of dimensionality. As a result, although the complexity of querying the whole spaces increases with dimensionality, the proportion of the whole decision space that needs further query decreases with the dimensionality, and these two effects tend to cancel out, and therefore the convergence rate of the average regret does not depend significantly on the dimensionality.

\section{Minimax Lower Bound}\label{sec:minimax}

In this section, we show the locally minimax lower bound of the nonconvex optimization problem with bandit feedback under Assumption \ref{ass}, which takes supremum over all functions that is sufficiently close to a reference function $f_0$, and takes infimum for all possible estimators \cite{van2000asymptotic}. 

Our construction of the locally minimax lower bound shares similar ideas with \cite{wang2018optimization}. We assume that the estimator has complete knowledge of a reference function $f_0$. Theorem \ref{thm:mmx} shows that there exists a convergence rate $\underset{\sigma}{\inf}\underset{f\in \Sigma_{f_0}}{\sup} R/T$, such that even if the perfect knowledge of $f_0$ is available, one can not find a decision rule with average regret converging faster than $\underset{\sigma}{\inf}\underset{f\in \Sigma_{f_0}}{\sup} R/T$:

\begin{thm}\label{thm:mmx}
	Assume that $f_0$ satisfies the following conditions:
	
	(a) $f_0(\mathbf{x})-f_0(\mathbf{x}_0^*)\leq M/2$;
	
	(b) For all $a\in (0, A)$ and all $\mathbf{x}\in S$,
	\begin{eqnarray}
	\left|\frac{1}{V(B(\mathbf{x},a))}\int_{B(\mathbf{x},a)} f_0(\mathbf{u}) d\mathbf{u}-f_0(\mathbf{x})\right|\leq \frac{1}{2}Ma^\alpha;
	\end{eqnarray}
	
	(c) There exists a constant $c_0$, such that for sufficiently small $r$,
	$\mathcal{P}\left(\{\mathbf{x}|f_0(\mathbf{x})<\epsilon\},r\right)\geq c_0\frac{\epsilon^\beta}{r^d},$
	in which $\mathcal{P}$ is the packing number. Then there exists a sequence $\epsilon_T$ that decays with $T$, such that
	\begin{eqnarray}
	\underset{\sigma}{\inf}\underset{f\in \Sigma_{f_0}}{\sup}R/T=\Omega\left( T^{-\frac{\alpha}{d+2\alpha-\alpha \beta}}\right),
	\label{eq:minimax}
	\end{eqnarray}
	in which $\Sigma_{f_0}$ is the set of all functions $f$ that satisfy Assumption \ref{ass} and $\norm{f-f_0}_\infty<\epsilon_T$.
\end{thm}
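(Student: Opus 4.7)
The plan is to prove the minimax lower bound by reducing to a multi-way hypothesis test. I construct a family of alternatives $\{f_{\mathbf{x}_0}\}$, indexed by points in an $r$-packing of the sublevel set $\{f_0<\epsilon_T\}$, verify that each alternative lies in $\Sigma_{f_0}$, and then invoke a Le Cam--Pinsker change-of-measure argument to show that for any strategy, at least one alternative must force expected regret of order $hT$, where the bump height $h$ is linked to the bump width $r$ by $h\sim r^\alpha$.

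For the construction, fix a smooth, compactly supported bump $\psi:\mathbb{R}^d\to[0,1]$ with $\psi(\mathbf{0})=1$ supported in the unit cube, let $r>0$ be a scale to be optimized, and let $\mathcal{X}_0$ be a maximal $r$-packing of $\{\mathbf{x}:f_0(\mathbf{x})<\epsilon_T\}$. Hypothesis (c) of the theorem gives $N:=|\mathcal{X}_0|\gtrsim \epsilon_T^\beta/r^d$. For each $\mathbf{x}_0\in\mathcal{X}_0$, define
\[
f_{\mathbf{x}_0}(\mathbf{x}) \;=\; f_0(\mathbf{x}) - h\,\psi\!\left(\tfrac{\mathbf{x}-\mathbf{x}_0}{r}\right),
\]
and couple $\epsilon_T\sim r^\alpha$ and $h\sim r^\alpha$ with the constant on $h$ strictly larger than that on $\epsilon_T$, so that $f_{\mathbf{x}_0}^*=f_0(\mathbf{x}_0)-h$ and the suboptimality gap outside $B(\mathbf{x}_0,r)$ is at least $h-\epsilon_T\gtrsim h$. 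To check $f_{\mathbf{x}_0}\in \Sigma_{f_0}$, the $L_\infty$ closeness $\|f_{\mathbf{x}_0}-f_0\|_\infty\le h<\epsilon_T$ is immediate; Assumption \ref{ass}(a) follows from $f_{\mathbf{x}_0}(\mathbf{x})-f_{\mathbf{x}_0}^*\le M/2+h\le M$; Assumption \ref{ass}(b) splits by the averaging scale $a$ relative to $r$, with the bump contributing bias $\lesssim h(a/r)^2\le h(a/r)^\alpha$ for $a\le r$ (using $\alpha\le 2$) and $\lesssim h(r/a)^d$ for $a>r$, both absorbed into $(M/2)a^\alpha$ once $h\le (M/2)r^\alpha$ and then combined with the half-budget from the hypothesis on $f_0$; Assumption \ref{ass}(c) follows from a direct packing bound after noting that the bump only adds one localized low-level region of width $r$.

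The core step is the testing argument. Let $N_{\mathbf{x}_0}$ denote the number of queries landing in $B(\mathbf{x}_0,r)$. Because the packing cubes are disjoint, $\sum_{\mathbf{x}_0}\mathbb{E}_{f_0}[N_{\mathbf{x}_0}]\le T$, so some $\mathbf{x}_0$ has $\mathbb{E}_{f_0}[N_{\mathbf{x}_0}]\le T/N$. Under $f_{\mathbf{x}_0}$, every query outside $B(\mathbf{x}_0,r)$ pays regret at least $h-\epsilon_T\gtrsim h$, so $\mathbb{E}_{f_{\mathbf{x}_0}}[R]\gtrsim h(T-\mathbb{E}_{f_{\mathbf{x}_0}}[N_{\mathbf{x}_0}])$. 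The chain rule for KL divergence of sequential Gaussian observations bounds $\mathrm{KL}(P_{f_0}\|P_{f_{\mathbf{x}_0}})\le\tfrac12 h^2\mathbb{E}_{f_0}[N_{\mathbf{x}_0}]\le h^2T/(2N)$, and Pinsker converts this into $\mathbb{E}_{f_{\mathbf{x}_0}}[N_{\mathbf{x}_0}]-\mathbb{E}_{f_0}[N_{\mathbf{x}_0}]\lesssim T\sqrt{h^2T/N}$. Hence once $h^2T\lesssim N$ with a small enough constant, $\mathbb{E}_{f_{\mathbf{x}_0}}[N_{\mathbf{x}_0}]\le T/2$ and $\mathbb{E}_{f_{\mathbf{x}_0}}[R]\gtrsim hT$. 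Substituting $h\sim r^\alpha$ and $N\sim r^{\alpha\beta-d}$ into the threshold $h^2T\sim N$ pins the optimal scale at $r\sim T^{-1/(d+2\alpha-\alpha\beta)}$, giving average regret $R/T\gtrsim h\sim T^{-\alpha/(d+2\alpha-\alpha\beta)}$, as required, with $\epsilon_T\sim r^\alpha\to 0$.

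The main obstacle is the careful verification that every $f_{\mathbf{x}_0}$ stays in $\Sigma_{f_0}$ with the original Assumption \ref{ass} constants---particularly the smoothness condition Assumption \ref{ass}(b), which must be checked uniformly across all averaging scales $a$ relative to the bump width $r$; the whole argument collapses if the alternatives leak out of $\Sigma_{f_0}$. The Pinsker step is also a point of care because the queries are adapted to the full history, so the KL bound must invoke the chain-rule identity for sequential observations rather than a naive product decomposition.
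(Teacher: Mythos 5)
Your overall strategy is sound and is genuinely different from what the paper does: the paper's proof of Theorem~\ref{thm:mmx} is a one-line reduction to the pure-exploration lower bound of \cite{wang2018optimization} (via the observation that a bound on $R/T$ yields, by outputting a uniformly random past query, a bound on the optimization error), whereas you give a self-contained bandit argument with a bump family, the divergence decomposition for adapted queries, and Pinsker's inequality. Your scale selection $h\sim r^\alpha$, $N\sim r^{\alpha\beta-d}$, and threshold $h^2T\sim N$ correctly reproduce the exponent $\alpha/(d+2\alpha-\alpha\beta)$. However, there are two genuine gaps as written.

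First, your parameter coupling is self-contradictory. You require the constant in $h\sim r^\alpha$ to be strictly larger than the one in $\epsilon_T\sim r^\alpha$ so that queries outside $B(\mathbf{x}_0,r)$ incur regret at least $h-\epsilon_T\gtrsim h$, but two sentences later you claim $\norm{f_{\mathbf{x}_0}-f_0}_\infty\le h<\epsilon_T$ for membership in $\Sigma_{f_0}$; both cannot hold. The repair is to decouple the two roles of ``$\epsilon$'': pack the sublevel set at a level $\epsilon'\le h/2$ (so the off-bump gap is at least $h/2$ and still $N\gtrsim (\epsilon')^\beta/r^d\sim r^{\alpha\beta-d}$), while taking the theorem's neighborhood radius to be, say, $\epsilon_T=2h$. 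Second, the verification of Assumption~\ref{ass}(c) for the perturbed functions is not a ``direct packing bound'' for a generic smooth bump. If $\psi$ has a nondegenerate (quadratic) maximum, the sublevel set $\{f_{\mathbf{x}_0}-f^*_{\mathbf{x}_0}<\epsilon\}$ contains a ball of radius of order $r\sqrt{\epsilon/h}$, whose packing number scales as $r^d(\epsilon/h)^{d/2}/a^d$; whenever $\beta>d/2$ (which is permitted for $\alpha<2$, since $\beta$ may be as large as $d/\alpha$) this exceeds $C_0\epsilon^\beta/a^d$ as $\epsilon\to 0$, so $f_{\mathbf{x}_0}\notin\Sigma_{f_0}$ and the supremum in \eqref{eq:minimax} never sees your alternatives. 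You must tailor the bump profile to the class, e.g. $1-\psi(\mathbf{u})\asymp\norm{\mathbf{u}}^{d/\beta}$ near the maximum, which is compatible with the $\alpha$-smoothness requirement precisely because $d/\beta\ge\alpha$ by Proposition~\ref{prop}. With these two repairs the argument goes through.
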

\begin{proof}
Note that minimizing the average regret is always harder than minimizing the optimization error. There is already a proof of the corresponding theorem for optimization error in \cite{wang2018optimization}, which can be used to prove Theorem \ref{thm:mmx} with some minor modification. We omit the detailed proof here for brevity.	
\end{proof}

Theorem \ref{thm:mmx} provides a locally minimax bound, which means that as long as the reference function $f_0$ satisfies some conditions, no method can achieve $T^{-\alpha/(d+2\alpha-\alpha \beta)}$ average regret for all $f$ that is in a close neighborhood of $f_0$, even if we have perfect knowledge of $f_0$.

In Theorem \ref{thm:mmx}, Assumptions (a), (b) are almost the same as those in Assumption \ref{ass}, except that the constant $M$ has been changed to $M/2$. We use this trick to ensure that all functions in the neighbor of $f_0$, i.e. $\Sigma_{f_0}$, satisfy Assumption \ref{ass}. Note that even though $f$ is sufficiently close to $f_0$, these two functions are not exactly the same. If we use $M$ instead, then $f_0$ can be a function that satisfies (a), (b) but close to be violated, and thus it is possible that $f$ violates (a) and (b). To avoid this, we use $M/2$ instead. (c) can be viewed as a reverse of Assumption \ref{ass} (c), which roughly means that Assumption \ref{ass} (c) is tight. This assumption is the same as Assumption (A2') in \cite{wang2018optimization}.

Comparing \eqref{eq:minimax} with \eqref{eq:base} and \eqref{eq:adaptive}, it can be observed that the simple bin splitting method is nearly minimax optimal up to a logarithmic factor only when $\beta\leq 1$. When $\beta>1$, the simple splitting method fails to improve with the increase of $\beta$. On the contrary, our new method is nearly minimax optimal for all possible $\alpha$ and $\beta$. This result indicates that our adaptive splitting method can not be further improved by more than a logarithm factor of $T$.

\section{Comparison with Related Works}\label{sec:compare}


\subsection{Comparison with bandit problem with strongly convex cost functions}
As discussed in Section~\ref{sec:adaptive}, if $\beta$ reaches its maximum $\beta=d/\alpha$, then the average regret converges with $\mathcal{O}\left(T^{-1/2}\ln^{d/\alpha+1} T\right)$. A typical example is that if $f$ is strongly convex and has bounded Hessian, i.e. $A I_d\preccurlyeq \nabla^2 f\preccurlyeq BI_d $, in which $A, B$ are constants and $I_d$ is a $d\times d$ identity matrix, then it is straightforward to show that Assumption \ref{ass} holds with $\alpha=2$ and $\beta=d/2$. In this case, our adaptive bin splitting method achieves the average regret bound $\tilde{\mathcal{O}}(T^{-1/2})$. Previous research \cite{agarwal2011stochastic,shamir2013complexity,hazan2014bandit,bubeck2017kernel} has shown that the bound $\tilde{\mathcal{O}}(T^{-1/2})$ is the best even for methods that are designed specifically for strongly convex cost functions, up to a logarithmic factor. This result indicates that even though our method is designed to solve nonconvex bandit problems, it is also optimal for strongly convex problems.
\subsection{Comparison with pure exploration problems}
Another interesting comparison is with the pure exploration problem discussed in \cite{wang2018optimization,polyak1990optimal,bach2016highly}. Here, we discuss two cases depending on the H{\"o}lder smoothness order.

(1) If the cost function $f$ is H{\"o}lder continuous with order no more than $2$, then Assumption \ref{ass} holds with some $\alpha\leq 2$. In this case, our bound \eqref{eq:adaptive} is exactly the same as Proposition 2 in \cite{wang2018optimization}, except a logarithmic factor. This result indicates that our method is rate optimal not only for bandit problems, but also for pure exploration problems. Moreover, we can show that our method requires less computation cost than the optimization method in \cite{wang2018optimization}. In \cite{wang2018optimization}, a large set of grid points, whose size is much larger than the total number of queries, need to be generated and used for later computation. The total time complexity is high due to this process. On the contrary, our new method has a much lower time complexity. By keeping all bins in a heap with the LCB values as their keys, there will be $\mathcal{O}(T)$ heap operations, with each operation requires $\mathcal{O}(\log |\mathcal{B}|)$ time. According to the capacity formula $n(B)=\lceil 2^{2\alpha k(B)}\rceil$, $|\mathcal{B}|$ grows with $\log T$, hence the total complexity is $\mathcal{O}(T\log\log T)$. This indicates that our new adaptive method reduces the time cost significantly, and thus this method is competitive also for pure exploration problems. 

(2) For highly smooth functions, which are H{\"o}lder continuous with order higher than $2$, our bound no longer matches the pure exploration error in \cite{wang2018optimization}. Such gap is inevitable since the bandit problem is inherently harder than pure exploration. In fact, even for quadratic functions, which are infinitely differentiable, the minimax analysis shows that the average regret can not converge faster than $\mathcal{O}(T^{-1/2})$. For more complex nonconvex functions, $\mathcal{O}(T^{-1/2})$ is the best rate one can expect. However, for pure exploration problems, it is possible to achieve faster rates if the function is highly smooth \cite{polyak1990optimal,bach2016highly}. Combining these two cases, we can see that for functions with low smoothness level, our method is optimal for both bandit and pure exploration problems. For highly smooth functions, no method is optimal for both problems.

Another difference between pure exploration and bandit problems is that for pure exploration problems, it is possible to design a method that is adaptive to smoothness parameters \cite{locatelli2018adaptivity,munos2011optimistic}, while for bandit problems, according to Theorem 3 in \cite{locatelli2018adaptivity}, such adaptivity is impossible. If we are not certain about the smoothness parameter, we can only use an $\alpha$ value that is smaller than the real smoothness parameter, and the convergence rate of the average regret becomes slower than the optimal rate.

\section{Numerical Examples}\label{sec:numerical}

In this section, we provide numerical examples to validate our theoretical analysis. The numerical simulation includes two parts. In particular, we compare the regret of our adaptive method with the simple splitting method.

In the simulation, we use two functions:
$f_1(\mathbf{x})=10 \norm{\mathbf{x}+\mathbf{c}}_2^2,$
and
$f_2(\mathbf{x})=10\min\{\norm{\mathbf{x}-\mathbf{c}}, \norm{\mathbf{x}+\mathbf{c}}\},$
in which $f_1$ as an example of convex function, and $f_2$ is an example of nonconvex function. $d$ is the dimensionality of the support set. We run simulation with $d=1,2,3$ separately. In both $f_1$ and $f_2$, $\mathbf{x}\in [-1,1]^d$.

The construction $f_1$ and $f_2$ can represent many practical cases. For example, in parameter tuning tasks, it is natural to assume that $f$ is strongly convex in a neighborhood of the optimal point $\mathbf{x}^*$, then the convergence of the average regret of this case will be similar to that of $f_1$. The other example $f_2$ can be used to represent the case in which there are multiple optimal actions. 

For each function, we run simulation $100$ times, with each run having $T=10,000$ queries. For the simple bin splitting method, we run simulation for multiple bin sizes, and generate a plot of the regret versus the bin length, as is shown in the blue curves in Figure \ref{fig:compare}. Moreover, we use an orange dashed line to show the regret of the adaptive splitting method. 

\begin{figure}[h!]
	\begin{center}
		\subfigure[$d=1$, cost function $f_1$]{\includegraphics[width=0.49\linewidth, height = 2.8cm]{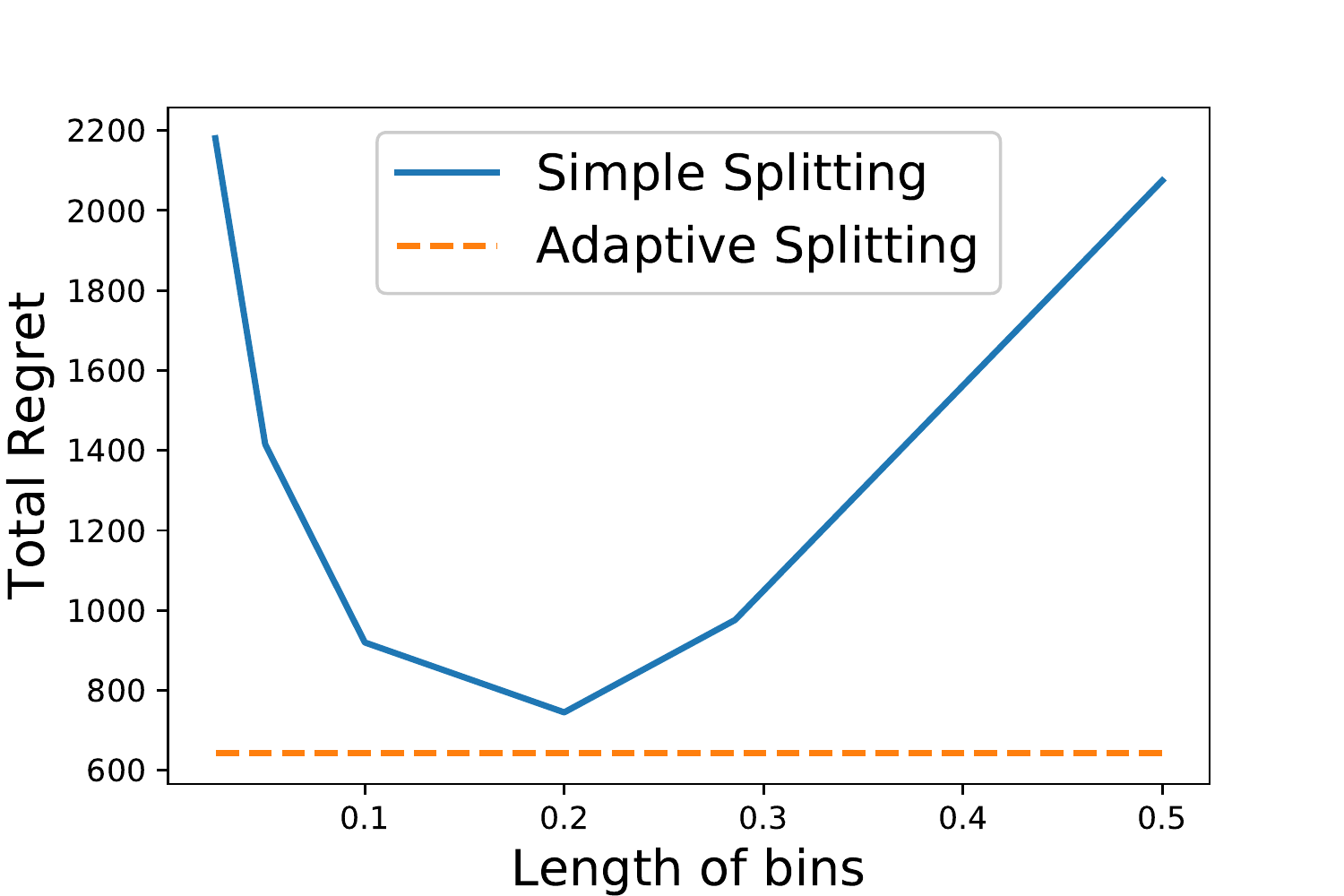}}	
		\subfigure[$d=2$, cost function $f_1$]{\includegraphics[width=0.49\linewidth, height = 2.8cm]{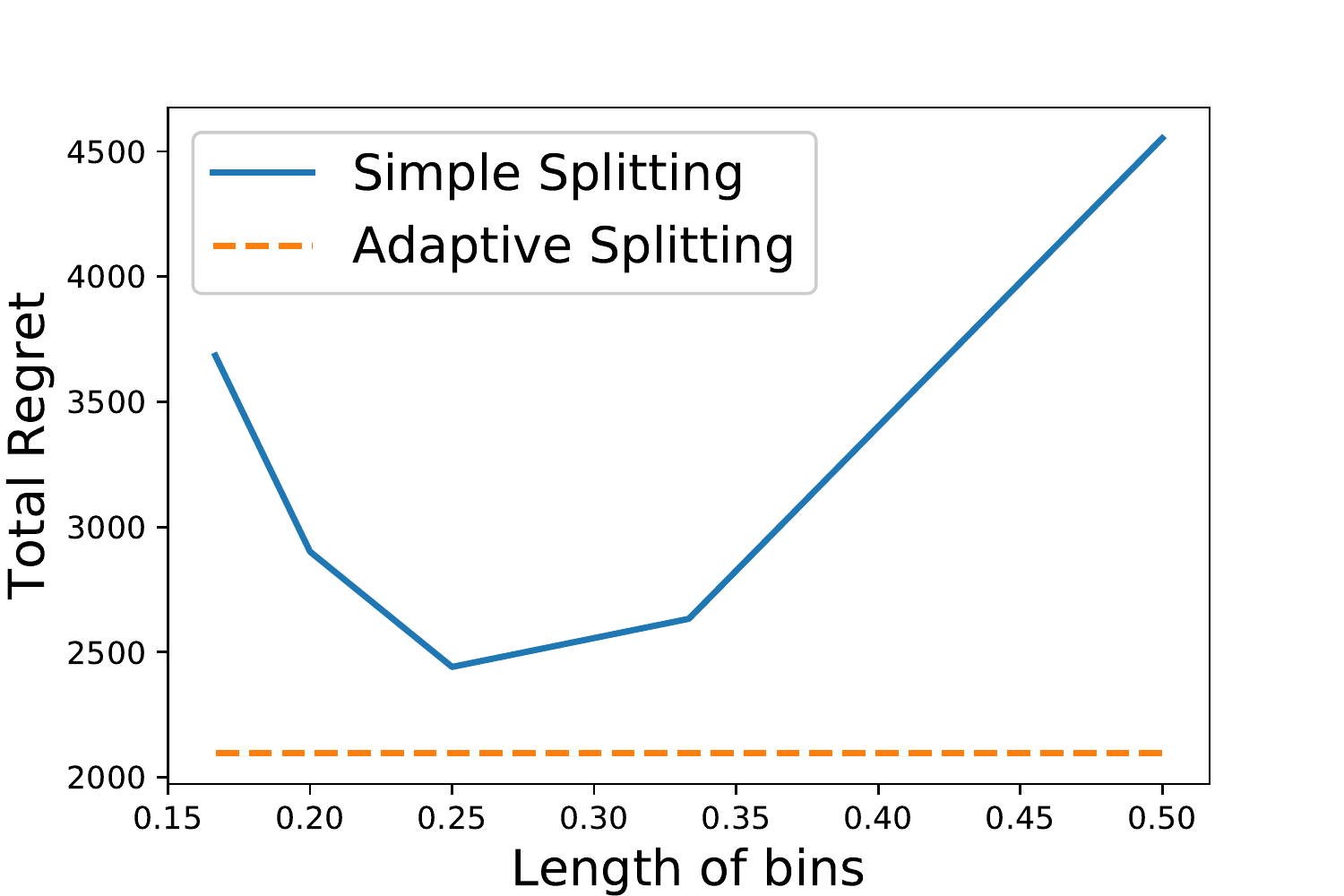}}		
		\subfigure[$d=3$, cost function $f_1$]{\includegraphics[width=0.49\linewidth, height = 2.8cm]{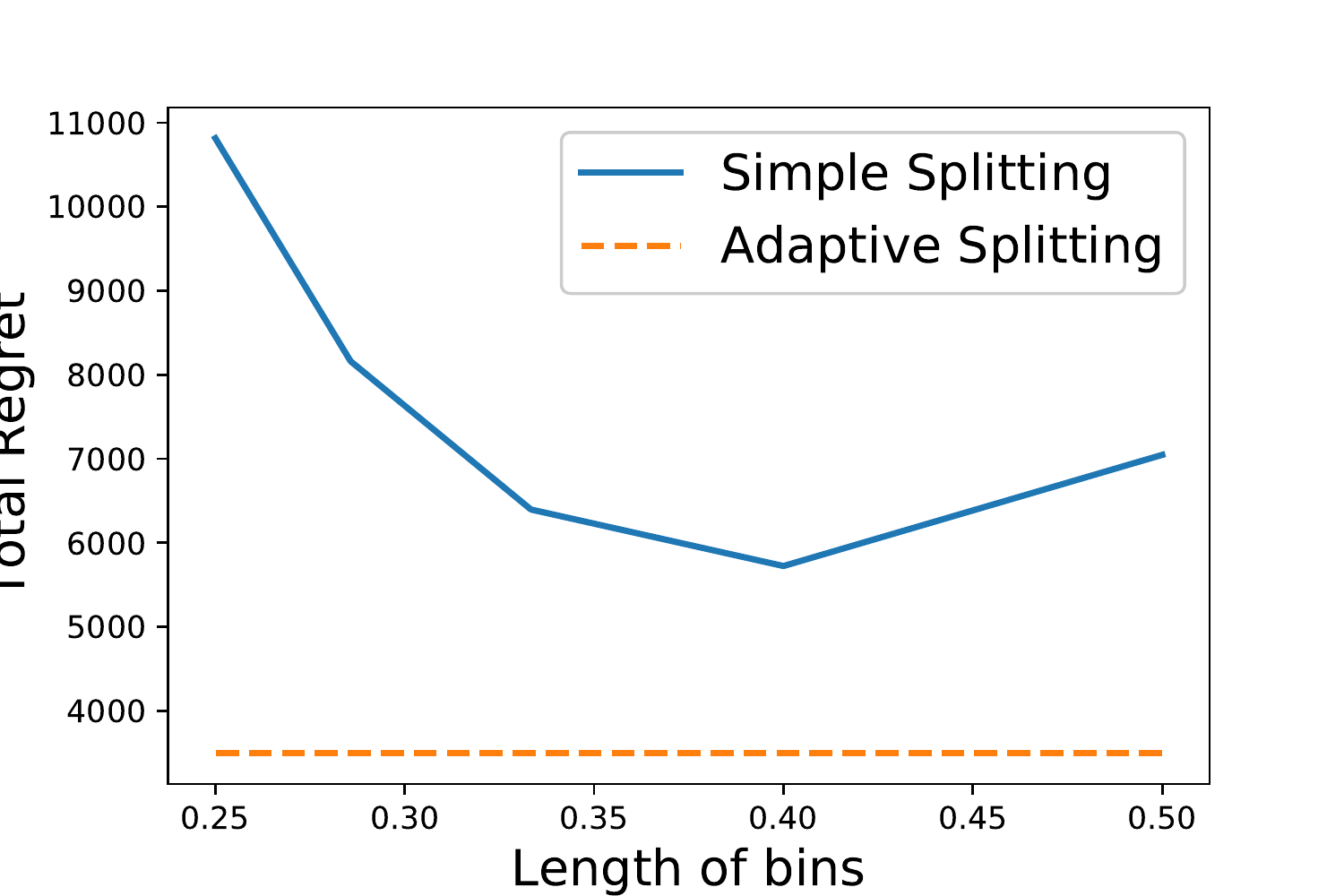}}	
		\subfigure[$d=1$, cost function $f_2$]{\includegraphics[width=0.49\linewidth, height = 2.8cm]{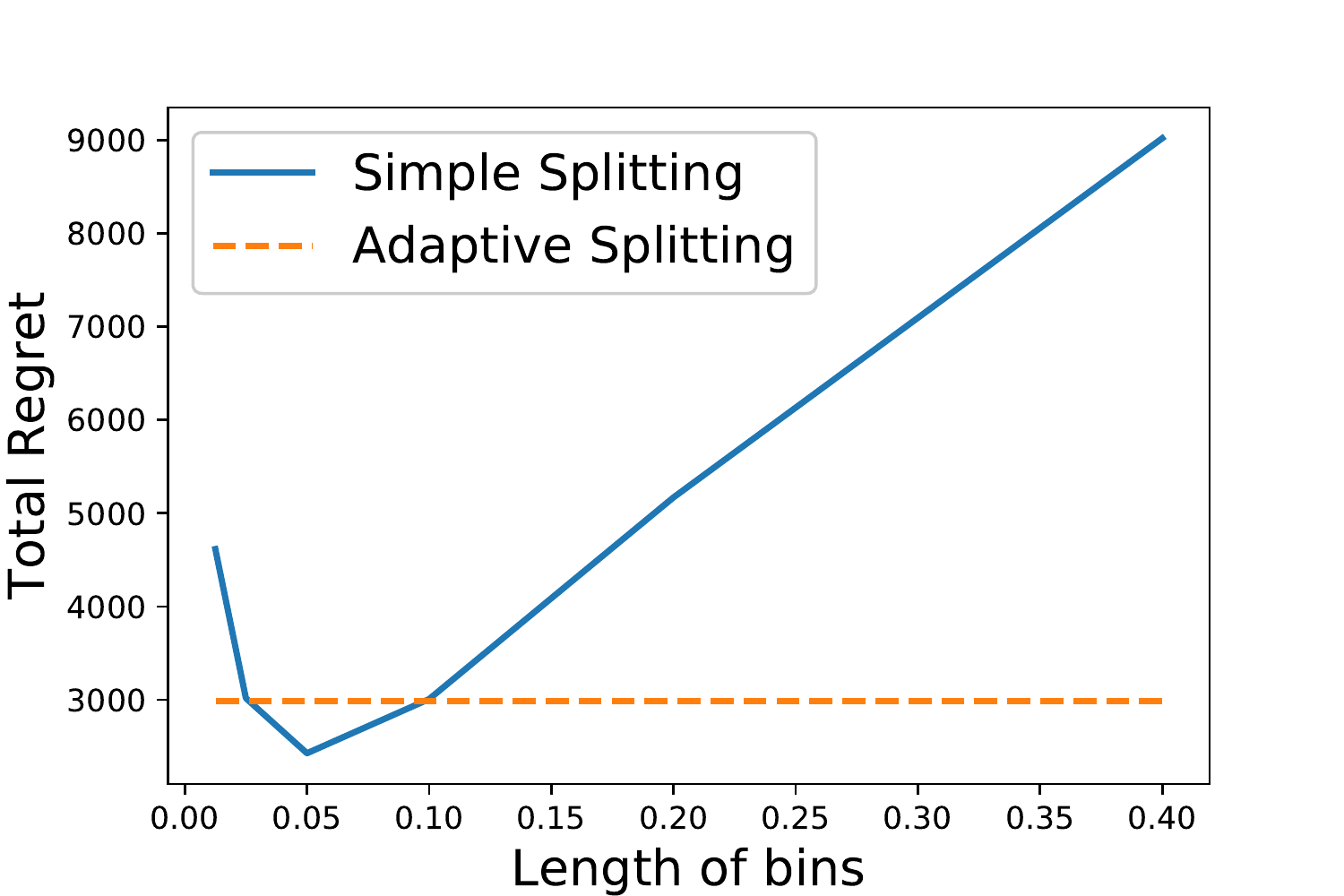}}	
		\subfigure[$d=2$, cost function $f_2$]{\includegraphics[width=0.49\linewidth, height = 2.8cm]{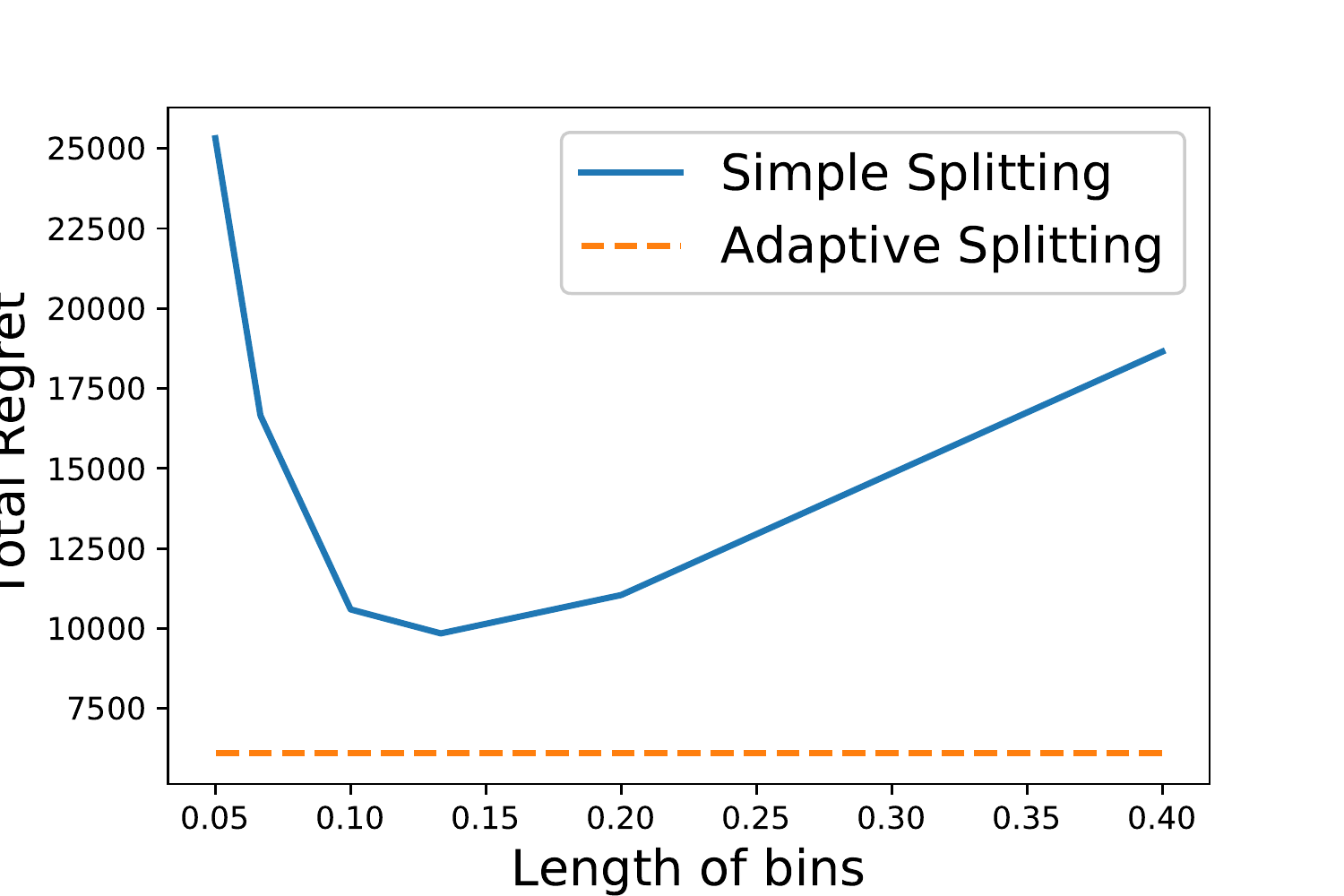}}	
		\subfigure[$d=3$, cost function $f_2$]{\includegraphics[width=0.49\linewidth, height = 2.8cm]{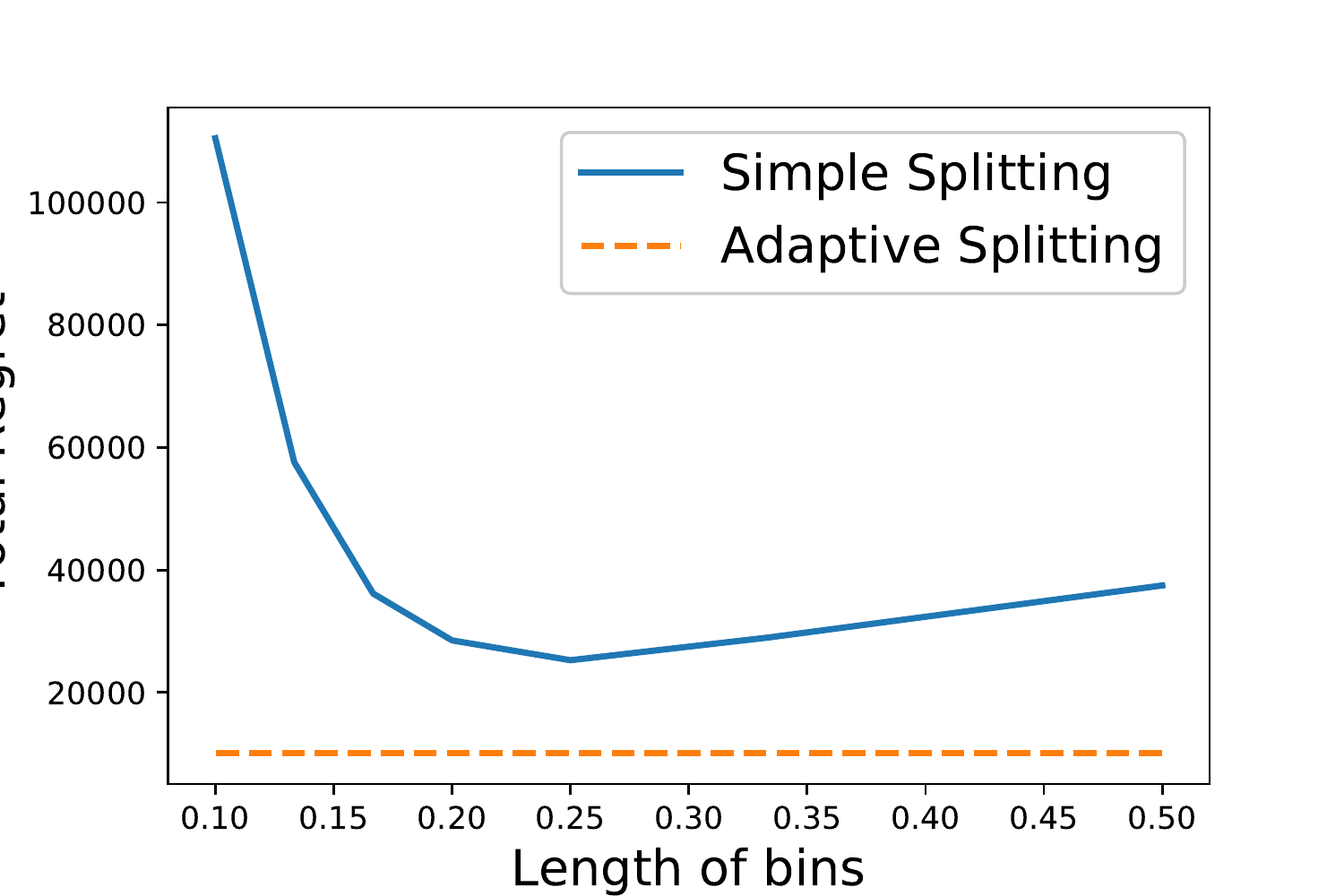}}						
		\caption{Comparison between simple splitting method and the proposed adaptive splitting method. The blue curve is the plot of regret vs bin size for simple splitting method, while the orange dashed line is the regret of the adaptive method. }\label{fig:compare}	
	\end{center}
\end{figure}

According to Figure \ref{fig:compare}, for most cases, the adaptive method is significantly better than the simple splitting method, even if the bin size of the latter is optimally selected. One exception is occurred for $f_2$ and $d=1$. Note that $f_2$ satisfies Assumption \ref{ass} with $\alpha=1$, $\beta=1$. According to Theorem \ref{thm:base} and Theorem \ref{thm:adaptive}, there is no significant difference between theoretical regret bound of these two methods. However, with higher dimensionality, the advantage of our adaptive splitting method becomes more obvious.

\section{Conclusion}\label{sec:conc}
In this paper, we have analyzed the general nonconvex optimization problem with bandit feedback under a smoothness assumption and a sublevel set assumption. We have derived the convergence rate of the average regret of simple bin splitting method. The result shows that the simple splitting method is optimal only for a restricted class of cost functions, and is suboptimal in other cases. 
To address this, we have proposed an adaptive splitting rule and have derived the convergence rate of the average regret. 
We have conducted locally minimax analysis to show that our method is optimal. We have compared our results with previous literatures, and showed that despite our method is designed for nonconvex bandit problems, it is also optimal for strongly convex bandit problems and nonconvex pure exploration problems. Finally, we conducted numerical simulations to validate our theoretical analysis.

\appendices
\section{Proof of Proposition 1}\label{app:prop1}
We begin with the following lemma:
\begin{lem}\label{lem:approx}
	There exists a constant $C_1$, such that for any $\mathbf{x}\in S$,$f(\mathbf{x})-f(\mathbf{x}^*)\leq C_1\norm{\mathbf{x}-\mathbf{x}^*}^\alpha$.
\end{lem}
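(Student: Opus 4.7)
\section*{Proof plan for Lemma \ref{lem:approx}}

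The goal is to upgrade the averaged smoothness in Assumption~\ref{ass}(b) to a pointwise H\"older-type inequality around the minimizer $\mathbf{x}^*$. The key observation is that since $\mathbf{x}^*$ is a global minimum, the integrand $f(\mathbf{u})-f(\mathbf{x}^*)$ is nonnegative, so enlarging the domain of integration only increases the average. This lets me pipe the smoothness at $\mathbf{x}$ into the smoothness at $\mathbf{x}^*$ via a single cube that contains both points.

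I will work with the $\ell_\infty$ norm (since the balls $B(\mathbf{x},a)$ in the assumption are cubes), and convert to the Euclidean norm at the end at the cost of a dimensional constant. Let $r=\lVert \mathbf{x}-\mathbf{x}^*\rVert_\infty$ and first treat the case $2r\le A$, so that Assumption~\ref{ass}(b) is applicable with $a=2r$ and $a=4r$. Since $\lVert \mathbf{x}-\mathbf{x}^*\rVert_\infty\le r$, we have $\mathbf{x}^*\in B(\mathbf{x},2r)$ and moreover the chain of inclusions $B(\mathbf{x},2r)\subset B(\mathbf{x}^*,4r)$ holds by the triangle inequality. Applying Assumption~\ref{ass}(b) at $\mathbf{x}$ with $a=2r$ gives
\begin{equation*}
f(\mathbf{x})-f(\mathbf{x}^*)\le \frac{1}{V(B(\mathbf{x},2r))}\int_{B(\mathbf{x},2r)}\bigl(f(\mathbf{u})-f(\mathbf{x}^*)\bigr)\,d\mathbf{u}+M(2r)^\alpha .
\end{equation*}
Because $f(\mathbf{u})-f(\mathbf{x}^*)\ge 0$, I can enlarge the integration region to $B(\mathbf{x}^*,4r)$, picking up the volume ratio $V(B(\mathbf{x}^*,4r))/V(B(\mathbf{x},2r))=2^d$. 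Then Assumption~\ref{ass}(b) applied at $\mathbf{x}^*$ with $a=4r$ bounds the resulting average by $M(4r)^\alpha$. Collecting constants yields
\begin{equation*}
f(\mathbf{x})-f(\mathbf{x}^*)\le \bigl(2^{d+2\alpha}+2^\alpha\bigr) M\, r^\alpha,
\end{equation*}
which is the desired bound in $\ell_\infty$ norm for small $r$.

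For the remaining regime $2r>A$, the point $\mathbf{x}$ is simply far from $\mathbf{x}^*$; I use Assumption~\ref{ass}(a) to bound $f(\mathbf{x})-f(\mathbf{x}^*)\le M$ and note $M\le M(2r/A)^\alpha$, so the same form of inequality holds after enlarging the constant. Finally, using $\lVert \mathbf{x}-\mathbf{x}^*\rVert_\infty\le \lVert \mathbf{x}-\mathbf{x}^*\rVert$ converts the bound to the Euclidean norm without changing its form, giving the required constant $C_1$.

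The main thing to watch for is making sure Assumption~\ref{ass}(b) is only invoked with radii inside $(0,A)$; the split into the two regimes above handles this. Beyond that the argument is a short calculation, with the one idea worth isolating being the ``inclusion trick'' $B(\mathbf{x},2r)\subset B(\mathbf{x}^*,4r)$ together with nonnegativity of $f-f(\mathbf{x}^*)$, which is what allows the smoothness at $\mathbf{x}$ and at $\mathbf{x}^*$ to be combined cleanly.
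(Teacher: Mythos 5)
Your proposal is correct and is essentially the paper's own argument: apply Assumption~\ref{ass}(b) at $\mathbf{x}$, use nonnegativity of $f-f(\mathbf{x}^*)$ to enlarge the cube around $\mathbf{x}$ to a doubled cube around $\mathbf{x}^*$ (paying the volume ratio $2^d$), and apply (b) again at $\mathbf{x}^*$, yielding the same $\mathcal{O}(2^{d+\alpha})M r^\alpha$ bound up to bookkeeping of constants. The only difference is that you explicitly handle the regime where the required radius exceeds $A$ via Assumption~\ref{ass}(a), a case the paper's proof silently skips; this is a minor but genuine improvement in rigor.
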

\begin{proof}
	Define $r_x:=\norm{\mathbf{x}-\mathbf{x}^*}$,
	in which $\norm{\cdot}$ is the sup norm. Denote $B_x$ as the cube centering at $\mathbf{x}$ with half length $r_x$, and $B_x'$ as the cube centering at $\mathbf{x}^*$ with half length $2r_x$. Then according to Assumption (b),
	\begin{eqnarray}
	&&\hspace{-5mm}|f(\mathbf{x})-f(\mathbf{x}^*)|\leq \left|f(\mathbf{x})-\frac{1}{V(B_x)}\int_{B_x}f(\mathbf{u})d\mathbf{u}\right|\nonumber\\
	&&+\left|\frac{1}{V(B_x)}\int_{B_x} f(\mathbf{u})d\mathbf{u}-f(\mathbf{x}^*)\right| \nonumber\\
	&\leq & Mr_x^\alpha + \frac{1}{V(B_x)}\int_{B_x} (f(\mathbf{u}-f(\mathbf{x}^*)))d\mathbf{u}\nonumber\\
	&\leq &Mr_x^\alpha + \frac{1}{V(B_x)}\int_{B_x'}(f(\mathbf{u}-f(\mathbf{x}^*)))d\mathbf{u}\nonumber\\
	&\leq &Mr_x^\alpha +\frac{V(B_x')}{V(B_x)} M(2r_x)^\alpha= (1+2^{d+\alpha})Mr_x^\alpha.\nonumber
	\end{eqnarray}
\end{proof}
With Lemma \ref{lem:approx}, for all $\mathbf{x}$ such that $\norm{\mathbf{x}-\mathbf{x}^*}<(\epsilon/C_1)^{1/\alpha}$, we have $f(\mathbf{x})-f(\mathbf{x}^*)<\epsilon$. Hence
\begin{eqnarray}
\hspace{-2mm}\mathcal{P}(\{\mathbf{x}|f(\mathbf{x})-f(\mathbf{x}^*)<\epsilon\},a)
&\geq& 	\mathcal{P}\left(B\left(\mathbf{x}^*,\left(\epsilon/C_1\right)^\frac{1}{\alpha}\right),a\right)\nonumber\\
&\gtrsim& \epsilon^\frac{d}{\alpha}/a^d.
\end{eqnarray}

Therefore $\beta$ can not be larger than $d/\alpha$.

\section{Proof of Theorem~\ref{thm:base}}\label{app:base}


Recall that $\mathbf{c}_j$ is the center of bin $B_j$. Define
$\mathbf{c}^*=\underset{\mathbf{c}\in \{\mathbf{c}_1,\ldots,\mathbf{c}_m \}}{\arg\min}f(\mathbf{c}),$
then
\begin{eqnarray}
&&\hspace{-5mm}R=\mathbb{E}\left[\sum_{t=1}^T (f(\mathbf{X}_t)-f(\mathbf{x}^*))\right]\nonumber\\
&&\hspace{-5mm}=\sum_{j=1}^m (f(\mathbf{c}_j)-f(\mathbf{c}^*))\mathbb{E}[n_j]+T(f(\mathbf{c}^*)-f(\mathbf{x}^*)),
\label{eq:base1}
\end{eqnarray}
in which $\mathbb{E}[n_j]$ is the expectation of the number of queries in bin $B_j$.

If the bin size $a$ is small, then $f(\mathbf{c}^*)$ will be sufficiently close to $f(\mathbf{x}^*)$. To be more precise, we use Lemma \ref{lem:approx}. Note that the optimizer must belong to one of the bins. Without loss of generality, suppose $\mathbf{x}^*\in B_1$, then 
\begin{eqnarray}
f(\mathbf{c}^*)-f(\mathbf{x}^*)\leq f(\mathbf{c}(B_1))-f(\mathbf{x}^*)\leq  C_1a^\alpha,
\label{eq:center}
\end{eqnarray}
in which $\mathbf{c}(B_1)$ is the center of $B_1$.

Define 
$\Delta_j:=f(\mathbf{c}_j)-f(\mathbf{c}^*)$,
then $\mathbb{E}[n_j]$ can be bounded in the same way as the finite state space multi armed bandit problem with upper confidence bound approach \cite{auer2002finite}. Denote $g_t^*$ as the lower confidence bound estimate at $\mathbf{c}^*$. Moreover, define $t_j(s)$ and $t^*(s)$ as the time of the $s$-th query of $\mathbf{c}_j$ and $\mathbf{c}^*$, respectively. Then for any positive integer $u$,
\begin{eqnarray}
\mathbb{E}[n_j]&\leq& \sum_{t=1}^T \text{P}(\mathbf{X}_t=c_j)\nonumber\\
&=&u+\sum_{t=u+1}^T \text{P}(g_{tj}\leq g_t^*, n_{t-1,j}\geq u)\nonumber\\
&\leq & u+\sum_{t=u+1}^T \sum_{s=u+1}^t \sum_{s'=1}^t \text{P}\left(\frac{1}{s}\sum_{i=1}^{t_j(s)} \mathbf{1}(\mathbf{X}_i=\mathbf{c}_j)Y_i\right.\nonumber\\
&&\left.-\sqrt{\frac{8\ln t}{s}}\leq \frac{1}{s'}\sum_{i=1}^{t^*(s)}\mathbf{1}(\mathbf{X}_i=\mathbf{c}^*)Y_i-\sqrt{\frac{8\ln t}{s'}}\right).\nonumber
\end{eqnarray}

Let $u=\lfloor 32\ln T/\Delta_j^2\rfloor$, then
\begin{eqnarray}
\mathbb{E}[n_j]
&\leq& \frac{32\ln T}{\Delta_j^2}+\sum_{t=u+1}^T \sum_{s=u+1}^t \sum_{s'=1}^t\nonumber\\
&& \left[\text{P}\left(\frac{1}{s} \sum_{i=1}^{t_j(s)}\mathbf{1}(\mathbf{X}_i=\mathbf{c}_j)W_i\leq -\sqrt{\frac{8\ln t}{s}} \right)\right.\nonumber\\
&&\left.+\text{P}\left(\frac{1}{s'}\sum_{i=1}^{t^*(s)}\mathbf{1}(\mathbf{X}_i=\mathbf{c}^*)W_i\geq \sqrt{\frac{8\ln t}{s'}}\right) \right]\nonumber\\
&\leq & \frac{32\ln T}{\Delta_j^2}+\sum_{t=u+1}^T \sum_{s=u+1}^t \sum_{s'=1}^t \frac{2}{t^4}\nonumber\\
&\leq & \frac{32\ln T}{\Delta_j^2}+\sum_{t=1}^\infty \frac{2}{t^2}=\frac{32 \ln T}{\Delta_j^2}+\frac{\pi^2}{3}.\nonumber
\end{eqnarray}

If $\beta\leq 1$, for any $\epsilon>0$,
\begin{eqnarray}
&&\sum_{j=1}^m \Delta_j \mathbb{E}[n_j]\nonumber\\
&=&\sum_{j=1}^m \Delta_j\mathbb{E}[n_j]\mathbf{1}(\Delta_j>\epsilon)+\sum_{j=1}^m \Delta_j\mathbb{E}[n_j]\mathbf{1}(\Delta_j\leq \epsilon)\nonumber\\
&\leq &\left(\sum_{j=1}^m \Delta_j^{2-\beta}\mathbb{E}[n_j]\mathbf{1}(\Delta_j>\epsilon)\right)^\frac{1}{2-\beta} \left(\sum_{j=1}^m \mathbb{E}[n_j]\right)^\frac{1-\beta}{2-\beta}\nonumber\\
&&+\epsilon T\nonumber\\
&\leq &\left(\sum_{j=1}^m \left(\frac{32\ln T}{\Delta_j^\beta}+\frac{\pi^2}{3}\Delta_j^{2-\beta}\right)\mathbf{1}(\Delta_j>\epsilon)\right)^\frac{1}{2-\beta}T^\frac{1-\beta}{2-\beta}\nonumber\\
&&+\epsilon T,
\label{eq:1}
\end{eqnarray}
in which the second step uses H{\"o}lder's inequality. According to Assumption (a), we have $\Delta_j\leq M$. It remains to bound $\sum_{j=1}^m 32\ln T\mathbf{1}(\Delta_j>\epsilon)/\Delta_j^\beta$. Define $S_\Delta=\{j|\Delta_j\leq \Delta \}$. Then $\cup_{j\in S_\Delta} B_j$ forms a packing of $\{\mathbf{x}|f(\mathbf{x})-f(\mathbf{x}^*)\leq \Delta+f(\mathbf{c}^*)-f(\mathbf{x}^*) \}$, since for all $j\in S_\Delta$, $f(\mathbf{c}_j)-f(\mathbf{x}^*)=\Delta_j+f(\mathbf{c}^*)-f(\mathbf{x}^*)$. According to Assumption (c),
\begin{eqnarray}
|S_\Delta|&\leq& \mathcal{P}\left(\{\mathbf{x}|f(\mathbf{x})-f(\mathbf{x}^*)\leq \Delta+f(\mathbf{c}^*)-f(\mathbf{x}^*) \},a\right)\nonumber\\
&\leq &C_0\left[1+(\Delta+f(\mathbf{c}^*)-f(\mathbf{x}^*))^\beta/a^d\right].
\label{eq:Sdelta}
\end{eqnarray}
Then
\begin{eqnarray}
&&\hspace{-6mm}\sum_{j=1}^m \frac{1}{\Delta_j^\beta}\mathbf{1}(\Delta_j>\epsilon)\leq  \sum_{j=1}^m \sum_{i=0}^{\left\lfloor \frac{1}{\epsilon^\beta}\right\rfloor} \mathbf{1}\left(1/\Delta_j^\beta>i\right)\nonumber\\
&\leq & \sum_{i=0}^{\left\lfloor \frac{1}{\epsilon^\beta}\right\rfloor} \sum_{j=1}^m \mathbf{1}(\Delta_j<i^{-\frac{1}{\beta}})\nonumber\\
&\overset{(a)}{\leq} & \sum_{i=1}^{\left\lfloor \frac{1}{\epsilon^\beta}\right\rfloor} C_0\left[1+\frac{(i^{-\frac{1}{\beta}}+f(\mathbf{c}^*)-f(\mathbf{x}^*))^\beta}{a^d} \right]+m\nonumber\\
&\overset{(b)}{\leq} & C_0\frac{1}{\epsilon^\beta}+\frac{C_0}{a^d}\sum_{i=1}^{\left\lfloor \frac{1}{\epsilon^\beta}\right\rfloor} \frac{1}{i}+\frac{C_0}{a^d}\frac{1}{\epsilon^\beta}(f(\mathbf{c}^*)-f(\mathbf{x}^*))^\beta+m\nonumber\\
&\overset{(c)}{\leq} & C_0\frac{1}{\epsilon^\beta}\left(1+\frac{C_1^\beta a^{\alpha \beta}}{a^d}\right)+\frac{C_0}{a^d}\left(\ln \frac{1}{\epsilon^\beta}+1\right)+m.
\end{eqnarray}
(a) comes from \eqref{eq:Sdelta}. Moreover, for $i=0$, we just bound $\sum_{j=1}^m \mathbf{1}(\Delta_j<i^{-\frac{1}{\beta}})$ by $m$. For (b), recall that we have assumed $\beta\leq 1$ in the statement of Theorem 1, hence $(i^{-\frac{1}{\beta}}+f(\mathbf{c}^*)-f(\mathbf{x}^*))^\beta\leq i^{-1}+(f(\mathbf{c}^*-f(\mathbf{x}^*)))^\beta$. (c) uses \eqref{eq:center}.

Let $\epsilon=a^\alpha$. Then
\begin{eqnarray}
\sum_{j=1}^m \frac{1}{\Delta_j^\beta}\mathbf{1}(\Delta_j>\epsilon)\leq \frac{C_2}{a^d}\ln \frac{1}{a}+m
\end{eqnarray}
for some constant $C_2$. Since $\mathcal{S}$ is compact, we have $m\sim 1/a^d$. From \eqref{eq:1}, we have
\begin{eqnarray}
\sum_{j=1}^m \Delta_j\mathbb{E}[n_j]&\lesssim& \left(\frac{1}{a^d}\ln \frac{1}{a}\right)^\frac{1}{2-\beta}+\epsilon T\nonumber\\
&\sim & \left(\frac{1}{a^d}\ln \frac{1}{a}\right)^\frac{1}{2-\beta}+Ta^\alpha.
\end{eqnarray}
From \eqref{eq:base1},
\begin{eqnarray}
R&=&\sum_{j=1}^m \Delta_j\mathbb{E}[n_j]+T(f(\mathbf{c}^*)-f(\mathbf{x}^*))\nonumber\\
&\lesssim & a^{-\frac{d}{2-\beta}}\left(-\ln a\right)^\frac{1}{2-\beta}+Ta^\alpha.
\end{eqnarray}

Let
$a\sim T^{-\frac{1}{d+2\alpha-\alpha\beta}},$
then
$R/T\lesssim T^{-\frac{\alpha}{d+2\alpha-\alpha \beta}}(\ln T)^\frac{1}{2-\beta}.$
The proof is complete.

\section{Proof of Theorem \ref{thm:adaptive}}\label{app:adaptive}
Let $g_t(\mathbf{x}^*)$ be the lower confidence bound value of the bin containing $\mathbf{x}^*$. Denote such bin as $B^*$. Moreover, let $n_t(B)$ be the number of queries in $B$ until time $t$, and $\mathbf{c}(B)$ be the center of $B$. Furthermore, let $I_s(B)$ be the set of first $s$ queries in $B$ according to the time order. Then the following  two lemmas hold:
\begin{lem}\label{lem:largeg}
	There exist two constants $C_3$ and $C_4$, such that if $\mu>C_1$, in which $C_1$ is the constant in Lemma \ref{lem:approx}, then
	\begin{eqnarray}
	\text{P}(g_{t+1}(\mathbf{x}^*)>f(\mathbf{x}^*))\leq C_4t\ln t\exp[-C_3\ln^2 t].
	\end{eqnarray}
\end{lem}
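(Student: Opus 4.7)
The plan is to unfold the definition of $g_{t+1}$, reduce the event $\{g_{t+1}(\mathbf{x}^*) > f(\mathbf{x}^*)\}$ to a Gaussian deviation event on the empirical noise average, and then union bound over the possible identities of $B^*$ and of $n(B^*)$.

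First, I would write the decomposition
$$g_{t+1}(B^*) - f(\mathbf{x}^*) = \bigl(\bar{f}(B^*) - f(\mathbf{x}^*)\bigr) + \bar{W}(B^*) - \mu a^\alpha - n^{-1/2}\ln t,$$
where $a$ is the side length of $B^*$, $n = n(B^*)$, and $\bar{f}(B^*), \bar{W}(B^*)$ are the averages of $f(\mathbf{X}_i)$ and $W_i$ over $i\in I(B^*)$. Since every query in $I(B^*)$ lies in $B^*$ by the algorithm's reassignment rule, we have $\norm{\mathbf{X}_i-\mathbf{x}^*}\leq a$ in sup-norm, and Lemma~\ref{lem:approx} yields $0\leq \bar{f}(B^*) - f(\mathbf{x}^*) \leq C_1 a^\alpha$. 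Combined with $\mu > C_1$, the deterministic part $(\bar{f}(B^*) - f(\mathbf{x}^*)) - \mu a^\alpha$ is nonpositive, so $\{g_{t+1}(\mathbf{x}^*) > f(\mathbf{x}^*)\} \subseteq \{\bar{W}(B^*) > n^{-1/2}\ln t\}$.

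Next, I would bound this noise event uniformly. For any fixed dyadic subcube $B\subseteq S$ containing $\mathbf{x}^*$ and any fixed $n$, let $\tau_k^B$ be the time of the $k$-th query in $B$. Since $W_{\tau_k^B}$ is drawn fresh and is independent of $\mathcal{F}_{\tau_k^B - 1}$, the noise values at queries in $B$ form an iid standard Gaussian sequence, and a Gaussian tail bound yields $\text{P}(\bar{W}_n(B) > n^{-1/2}\ln t) \leq e^{-\ln^2 t/2}$. The number of bins that could ever contain $\mathbf{x}^*$ up to time $t+1$ is $O(\log t)$, because the capacity rule $n_C(B) = \lceil 2^{2\alpha k(B)}\rceil$ combined with $n_C(B)\leq t$ forces depth $k(B) = O(\log t)$; and for each such bin $n$ has at most $t$ possible values. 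A union bound over these $O(t \log t)$ (bin, $n$) pairs gives the claimed bound with $C_3 = 1/2$ and $C_4$ absorbing the $O(\log t)$ constant.

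The main obstacle is the measurability/independence bookkeeping in the stopping-time step: one must verify that the event defining $\tau_k^B$ (the algorithm chose to make its $k$-th query in $B$ at that step) is measurable with respect to $\mathcal{F}_{\tau_k^B - 1}$, so that the fresh noise $W_{\tau_k^B}$ is independent of it and the partial noise sums in $B$ truly form a random walk with iid Gaussian increments. Once this is set up cleanly, the Gaussian tail and the union bound close the argument.
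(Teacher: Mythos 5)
Your proof is correct and follows the same architecture as the paper's: a union bound over the $O(\log t)$ possible depths of the nested chain of bins containing $\mathbf{x}^*$ (with the depth cap $k\lesssim \log_2 t/(2\alpha)$ coming from the capacity rule exactly as you argue) and over the $t$ possible values of the sample count, followed by a tail bound on the empirical average at fixed depth and fixed count; the optional-sampling bookkeeping you flag as the main obstacle is handled in the paper by the same standard device of restricting to the first $s$ query times in the bin. The one place you genuinely diverge is the treatment of the function values: the paper keeps $f(\mathbf{X}_i)-f(\mathbf{c}(B^*))$ inside the random sum, uses Assumption~\ref{ass}~(a) and Hoeffding's lemma to declare it sub-Gaussian with parameter $M^2/2$, and ends with exponent $\ln^2 t/(M^2+2)$, whereas you bound $0\leq \bar{f}(B^*)-f(\mathbf{x}^*)\leq C_1 a^\alpha$ pointwise via Lemma~\ref{lem:approx} and absorb it into the deterministic bias using $\mu>C_1$, leaving a pure Gaussian average and the constant $C_3=1/2$. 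Your version is arguably tidier: it sidesteps the fact that $f(\mathbf{X}_i)-f(\mathbf{c}(B^*))$ has a nonzero conditional mean (of order $Ma^\alpha$ by Assumption~\ref{ass}~(b)), which the paper's Hoeffding step does not explicitly recenter and which would otherwise have to be charged against the slack $(\mu-C_1)a^\alpha$. Both routes yield a bound of the stated form $C_4 t\ln t\exp[-C_3\ln^2 t]$.
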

\begin{lem}\label{lem:prob}
	If a bin $B$ has been split $k$ times, $k> k_c$ for some constant $k_c$, and 
	\begin{eqnarray}
	f(\mathbf{c}(B'))-f(\mathbf{x}^*)\geq 3\times 2^{-\alpha (k-1)}\max\{ \ln t, \mu a_0^\alpha \},
	\label{eq:probcond}
	\end{eqnarray} 
	in which $B'$ is the parent of $B$, then the probability that the $(t+1)$th query falls in $B$ is bounded by
	$\text{P}(t+1\in I(B))\leq \phi_1(t_k),$
	for some function $\phi_1$ that decays faster than any polynomial, in which 
	$t_k=\left\lceil \frac{2^{2\alpha k}-1}{2^{2\alpha}-1}\right\rceil.$
\end{lem}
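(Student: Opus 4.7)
The plan is to reduce the event $\{t+1\in I(B)\}$ to a concentration event for the parent bin $B'$ and then apply an UCB-style argument using Lemma~\ref{lem:largeg}. The key observation is that $t+1 \in I(B)$ forces $B'$ to have been selected at least $\lceil 2^{2\alpha(k-1)}\rceil+1$ times by step $t+1$: either $B$ already existed before step $t+1$, in which case $B'$ was split earlier after reaching its capacity and being picked once more; or $B$ is created at step $t+1$ itself, in which case the same condition holds on the current pick. Since this selection count is of the same order as $t_k$, it suffices to bound the probability that $B'$ is selected so many times.

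For each selection of $B'$ at time $\tau$ the LCB rule gives $g_\tau(B')\le g_\tau(B^*_\tau)$, where $B^*_\tau$ is the currently active bin containing $\mathbf{x}^*$. By Lemma~\ref{lem:largeg}, $P(g_\tau(B^*_\tau)>f(\mathbf{x}^*))\le C_4\tau\ln\tau\exp(-C_3\ln^2\tau)$, which is sub-polynomial, so up to this additive error I may condition on $g_\tau(B')\le f(\mathbf{x}^*)$. Writing $\bar f(B')=(1/V(B'))\int_{B'}f$ and unfolding the definition of $g$, this becomes the empirical deviation bound
\[
\bar Y_{n_\tau(B')}(B')-\bar f(B') \;\leq\; -(\bar f(B')-f(\mathbf{x}^*))+\mu(a')^\alpha+n_\tau(B')^{-1/2}\ln\tau.
\]
By Assumption~\ref{ass}(b), $\bar f(B')\ge f(\mathbf{c}(B'))-M(a')^\alpha$, and using $\max\{u,v\}\ge\tfrac12(u+v)$ the hypothesis gives $f(\mathbf{c}(B'))-f(\mathbf{x}^*)\ge\tfrac{3}{2}\ln t\cdot 2^{-\alpha(k-1)}+\tfrac{3}{2}\mu(a')^\alpha$. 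Choosing $\mu\ge 2M$ (guaranteed by the standing hypothesis $\mu>(1+2^{d+\alpha})M$) makes the $(a')^\alpha$ contributions cancel, and once $n_\tau(B')\ge n_C(B')$ the confidence term satisfies $n_\tau(B')^{-1/2}\ln\tau\le \ln t\cdot 2^{-\alpha(k-1)}$; the required negative deviation is therefore at least of order $\tfrac{1}{2}\ln t\cdot 2^{-\alpha(k-1)}$.

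A subgaussian concentration bound (Hoeffding for $f(\mathbf{X}_i)-\bar f(B')$, which is bounded by $M$ under Assumption~\ref{ass}(a), combined with Gaussian tails for the noise $W_i$) then gives
\[
P\!\left(\bar Y_n(B')-\bar f(B')\le -\tfrac12\ln t\cdot 2^{-\alpha(k-1)}\right)\;\leq\;\exp\!\Bigl(-c\,n\,\ln^2 t\cdot 2^{-2\alpha(k-1)}\Bigr),
\]
which at $n\ge n_C(B')\asymp 2^{2\alpha(k-1)}$ is at most $\exp(-c\ln^2 t)$. Applying a maximal inequality uniformly over $n\ge n_C(B')$ (the geometric sum is dominated by its first term), union-bounding the Lemma~\ref{lem:largeg} failure over the $O(t)$ candidate times, and noting $t\ge t_k$, yields $P(t+1\in I(B))\le \phi_1(t_k)$ with $\phi_1(x)\lesssim \exp(-c'\ln^2 x)$, which decays faster than any polynomial. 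The main obstacle is the delicate scale-matching in this step: the LCB confidence $n^{-1/2}\ln t$, the bias correction $\mu(a')^\alpha$, and the hypothesized gap $f(\mathbf{c}(B'))-f(\mathbf{x}^*)$ must all be balanced at the capacity threshold $n=2^{2\alpha(k-1)}$, and the net negative deviation has to be verified in both regimes of the $\max\{\ln t\cdot 2^{-\alpha(k-1)},\mu(a')^\alpha\}$ in the hypothesis. The condition $k>k_c$ is used to absorb lower-order constants, ensuring $a'=a_0 2^{-(k-1)}$ is small enough for the $(a')^\alpha$ corrections in the concentration argument to be strictly dominated by the main $\ln t\cdot 2^{-\alpha(k-1)}$ term.
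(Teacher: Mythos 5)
Your proposal is correct and follows essentially the same route as the paper's proof: reduce the event $\{t+1\in I(B)\}$ to the parent $B'$ being selected while at capacity at some time $s\geq t_k$, split off the event $g(\mathbf{x}^*)>f(\mathbf{x}^*)$ via Lemma~\ref{lem:largeg}, and apply subgaussian concentration to the parent's empirical mean, with the gap hypothesis \eqref{eq:probcond} absorbing both the bias term $\mu a_{k-1}^\alpha$ and the confidence width $N_{k-1}^{-1/2}\ln t$. The only cosmetic differences are that you center at the bin average of $f$ rather than at $f(\mathbf{c}(B'))$ and handle the two regimes of the $\max$ via $\max\{u,v\}\ge\tfrac12(u+v)$, where the paper instead bounds each of the two terms by a $\tfrac13$-fraction of the gap.
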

Lemma \ref{lem:largeg} validates the construction of lower confidence bound by showing that with high probability, $g_t(\mathbf{x}^*)$ does not exceed $f(\mathbf{x}^*)$, while Lemma \ref{lem:prob} shows that the probability of a query falling in each bin $B$ will be low for large $k$. Based on Lemma \ref{lem:prob}, we can then bound the number of queries in each bin. In particular, we show the following lemma.
\begin{lem}\label{lem:n}
	If $k>k_c$, and the following two condition holds:
	\begin{eqnarray}
	f(\mathbf{c}(B')-f(\mathbf{x}^*)&\geq& 3\times 2^{-\alpha(k-1)}\max\{ \ln T, \mu a_0^\alpha \},\nonumber\\
	f(\mathbf{c}(B))-f(\mathbf{x}^*)&\geq& 3\mu a_k^\alpha,
	\label{eq:ncond}
	\end{eqnarray}
	then
	\begin{eqnarray}
	\mathbb{E}[n(B)]\leq \frac{\ln^2 T}{(f(\mathbf{c}(B))-f(\mathbf{x}^*))^2} \phi_2(t_k),
	\end{eqnarray}
	for some function $\phi_2$ that decays faster than any polynomial.
\end{lem}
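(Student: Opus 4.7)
The plan is to combine a standard LCB concentration argument with Lemma \ref{lem:prob}. Set $\Delta := f(\mathbf{c}(B))-f(\mathbf{x}^*)$ and fix a threshold $u = C\ln^2 T/\Delta^2$ with $C$ a sufficiently large absolute constant. The starting decomposition is $n(B) \leq u + N_2$, where $N_2 := \#\{t\leq T : \mathbf{X}_t\in B,\ n_{t-1}(B)\geq u\}$, since every increment of $n(B)$ beyond level $u$ is counted in $N_2$. The task is then to control $\mathbb{E}[N_2]$ in two complementary ways and take the better bound.

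For the first bound, I would run the classical LCB argument. A query at time $t$ falls in $B$ only if $g_t(B) \leq g_t(B_t^*)$, where $B_t^*$ is the bin currently containing $\mathbf{x}^*$. Define the good event $G_t := \{g_t(\mathbf{x}^*) \leq f(\mathbf{x}^*)\} \cap \{\bar{Y}(B) \geq \bar f_B - n^{-1/2}(B)\ln t\}$, with $\bar Y(B)$ the sample mean of the losses collected in $B$ and $\bar f_B := V(B)^{-1}\int_B f\, d\mathbf{u}$. On $G_t$ with $n_{t-1}(B)\geq u$, chaining the smoothness bound $|\bar f_B - f(\mathbf{c}(B))| \leq M a_k^\alpha$ from Assumption \ref{ass}(b), the gap hypothesis $\Delta \geq 3\mu a_k^\alpha$, and the parameter choice $\mu > M$ forces $g_t(B) > f(\mathbf{x}^*) \geq g_t(\mathbf{x}^*)$, ruling out $\mathbf{X}_t \in B$. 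Hence $\{\mathbf{X}_t\in B,\ n_{t-1}(B)\geq u\}\subseteq G_t^c$. Lemma \ref{lem:largeg} bounds the first half of $\Pr[G_t^c]$ by $C_4 t\ln t\exp(-C_3\ln^2 t)$; a Gaussian concentration for $\bar Y(B)$ conditional on $n(B)=n$, union-bounded over $n\in\{u,\ldots,T\}$, bounds the second by $T\exp(-\ln^2 t/2)$. Both sums over $t$ telescope to $O(1)$, giving $\mathbb{E}[n(B)] \leq u + O(1) = O(\ln^2 T/\Delta^2)$.

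For the complementary bound, I would invoke Lemma \ref{lem:prob}. The parent condition of Lemma \ref{lem:n} strengthens the one in Lemma \ref{lem:prob} by replacing $\ln t$ with $\ln T$, so Lemma \ref{lem:prob} applies at every $t\leq T$, yielding $\Pr[\mathbf{X}_t \in B] \leq \phi_1(t_k)$ and therefore $\mathbb{E}[n(B)] \leq T\phi_1(t_k)$. Taking the minimum of the two bounds and setting $\phi_2(t_k) := \min\{C_5,\ T\Delta^2 \phi_1(t_k)/\ln^2 T\}$ produces exactly $\mathbb{E}[n(B)] \leq (\ln^2 T/\Delta^2)\phi_2(t_k)$, and since $\phi_1$ decays faster than any polynomial in $t_k$, so does $\phi_2$ once $t_k$ is large enough that the second argument of the minimum is active.

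The principal obstacle is the uniform-in-$n$ concentration step: because $n(B)$ is itself an adaptive stopping quantity and the sample mean $\bar Y(B)$ depends on the entire random query sequence in $B$, I must condition on each admissible value $n(B)=n$ and then union-bound. The LCB's specific $n^{-1/2}(B)\ln t$ correction (in place of the usual $\sqrt{\ln t/n}$) is what produces the Gaussian tail $\exp(-\ln^2 t/2)$ needed to make both the union over $n\in\{u,\ldots,T\}$ and the sum over $t\leq T$ summable. The remaining delicate bookkeeping is verifying that the chain $\Delta \geq 3\mu a_k^\alpha \geq (M+\mu)a_k^\alpha + (\text{noise slack})$ closes with the right constants under the hypothesis $\mu > (1+2^{d+\alpha})M$ of Theorem \ref{thm:adaptive}, and handling the minor subtlety that $B_t^*$ may itself be split during the horizon so that the ``good'' event in fact has to be defined with respect to the current ancestor of $B^*$.
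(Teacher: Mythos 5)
Your skeleton matches the paper's: the threshold $u \sim \ln^2 T/\Delta^2$, the LCB concentration argument for queries beyond level $u$ (with Lemma \ref{lem:largeg} handling $g_t(\mathbf{x}^*)$ and a union bound over the random value of $n_{t-1}(B)$), and Lemma \ref{lem:prob} as the source of the superpolynomial decay in $t_k$. But the step where you combine the two bounds by taking a minimum does not deliver the lemma as stated, and the gap matters downstream. Your $\phi_2(t_k):=\min\{C_5,\ T\Delta^2\phi_1(t_k)/\ln^2 T\}$ is not a function of $t_k$ alone, and it does not decay faster than any polynomial in $t_k$ uniformly in $T$: for any fixed $t_k$, as $T\to\infty$ the minimum saturates at the constant $C_5$. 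Concretely, for every $k$ with $\phi_1(t_k)\gtrsim \ln^2 T/(T\Delta^2)$ your bound is just $C_5\ln^2 T/\Delta^2$ with no decay in $k$, so the sum $\sum_k 2^{k(\alpha+d)}\phi_2(t_k)$ that must be $O(1)$ in the proof of Theorem \ref{thm:adaptive} instead picks up a factor polynomial in $T$. Interpolating the minimum as $\sqrt{ab}$ does not rescue it either; that leaves a residual $\sqrt{T}$.

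The fix is to make the two mechanisms multiply rather than compete, which is what the paper does. Bound the contribution of the first $u$ queries by $\mathbb{E}[\min(n(B),u)]\le u\,\text{P}(n(B)\ge 1)$. The event $\{n(B)\ge 1\}$ --- that $B$ ever receives a query at all --- is contained in the event that at some time $s\le T$ the full parent $B'$ is selected, i.e.\ $g_{s+1}(B')<g_{s+1}(\mathbf{x}^*)$ with $n_s(B')=N_{k-1}$; the computation in the proof of Lemma \ref{lem:prob} already carries a union over all such $s$ and bounds this probability by $\phi_1(t_k)$. Hence the first term is $u\,\phi_1(t_k)\lesssim (\ln^2 T/\Delta^2)\,\phi_1(t_k)$, the tail beyond $u$ contributes only the summable superpolynomially small concentration terms you already identified, and $\phi_2$ comes out as a genuine function of $t_k$ with the required decay. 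The rest of your bookkeeping (the chain $\Delta\ge 3\mu a_k^\alpha$ absorbing both the bias $\mu a_k^\alpha$ and the noise slack $u^{-1/2}\ln t$, and the union over admissible values of $n(B)$) is sound and is essentially what the paper does.
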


With these lemmas, we then bound the expected cumulative regret. Here we use the following notation. Let $B_{kj}, k=0,1,\ldots, j=1,\ldots,m_k$ be the $j$-th bin among all bins that have been divided $k$ times. Then according to the assumption that $S$ is compact, there exists a constant $C_s$, such that
\begin{eqnarray}
m_k\leq C_s/a^d=C_s2^{kd}/a^d.
\label{eq:mk}
\end{eqnarray}
Define
$R_k:=\mathbb{E}\left[\sum_{t=1}^T \sum_{j=1}^{m_k}\mathbf{1}(t\in B_{kj})(f(\mathbf{X}_t)-f(\mathbf{x}^*))\right],$
then
$R=\sum_{k=0}^\infty R_k.$
To bound $R$, we bound each $R_k$ for the following three cases separately. 

\textbf{Case 1: $k\leq k_c$}. In this case, Lemmas \ref{lem:prob} and \ref{lem:n} do not hold. We provide a simple upper bound to $R_k$. According to the adaptive partition rule,
$n(B_{kj})\leq \left\lceil 2^{2\alpha k}\right\rceil\leq \left\lceil 2^{2\alpha k_c}\right\rceil.$
Therefore
\begin{eqnarray}
R_k&\leq & M\sum_{j=1}^{m_k}\mathbb{E}\left[\sum_{t=1}^T \mathbf{1}(t\in I(B_{kj}))\right]\nonumber\\
&\leq &Mm_k\lceil 2^{2\alpha k_c}\rceil 
\leq  \frac{MC_s}{a_0^d}2^{k_cd}\lceil 2^{2\alpha k_c}\rceil.
\end{eqnarray}

\textbf{Case 2: $k_c<k\leq \log_2 T/(d+2\alpha-\alpha \beta)$}. In this case, we define
\begin{eqnarray}
R_{kj}=\mathbb{E}\left[\sum_{t=1}^T \mathbf{1}(t\in I(B_{kj}))(f(\mathbf{X}_t)-f(\mathbf{x}^*))\right],
\end{eqnarray}
for $j=1,\ldots,m_k$. then
$R_k=\sum_{j=1}^{m_k} R_{kj}.$

We define the following two sets:
\begin{eqnarray}
A_k&=&\{j|f(\mathbf{c}((B_{kj}'))-f(\mathbf{x}^*)\geq 3\times 2^{-\alpha (k-1)}\max\{\ln T,\nonumber\\
&& \mu a_0^\alpha\}, f(\mathbf{c}((B_{kj}))-f(\mathbf{x}^*)\geq 3\mu a_k^\alpha \},\nonumber\\
B_k&=&\{1,\ldots, m_k \}\setminus A_k.
\label{eq:akbk}
\end{eqnarray}
Then for all $j\in A_k$, we have
\begin{eqnarray}
R_{kj}
&=& \mathbb{E}\left[\sum_{t=1, t\in I(B_{kj})}^T\mathbb{E}[f(\mathbf{X}_t)-f(\mathbf{x}^*)|\mathbf{X}_t\in I(B_{kj})]\right]\nonumber\\
&\overset{(a)}{\leq}& \mathbb{E}\left[\sum_{t=1, t\in I(B_{kj})}^T (f(\mathbf{c}(B_{kj}))-f(\mathbf{x}^*)+Ma_k^\alpha)\right]\nonumber\\
&\overset{(b)}{\leq}& \frac{(\ln^2 T)\phi_2(t_k)}{(f(\mathbf{c}(B_{kj}))-f(\mathbf{x}^*))^2}(f(\mathbf{c}(B_{kj}))-f(\mathbf{x}^*)+Ma_k^\alpha)\nonumber\\
&\lesssim & (\ln^2 T) 2^{k\alpha}\phi_2(t_k),
\label{eq:Rkj1}
\end{eqnarray}
in which $A\lesssim B$ means that $A\leq CB$ for some constant $C$ that depends only on $a_0, \alpha, M, \mu, C_0, C_s, \beta, d$. (a) uses Assumption (b), and (b) comes from Lemma \ref{lem:n}.

Now we bound $R_{kj}$ for $j\in B_k$ via the following lemma.
\begin{lem}\label{lem:Ef}
	For all $j\in B_k$,
	\begin{eqnarray}
	\mathbb{E}[f(\mathbf{X}_t)-f(\mathbf{x}^*)|t\in I(B_{kj})]
	\leq 2^{d+2-\alpha (k-1)}(\ln T+\mu a_0^\alpha).\nonumber
	\end{eqnarray}
\end{lem}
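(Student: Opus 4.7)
The plan is to reduce $\mathbb{E}[f(\mathbf{X}_t) - f(\mathbf{x}^*) \mid t \in I(B_{kj})]$ to the average of $f - f(\mathbf{x}^*)$ over a cube, which can be controlled via Assumption \ref{ass}(b). The algorithm draws $\mathbf{X}_t$ uniformly in whichever bin is selected at step $t$: either $B_{kj}$ itself is picked (and not yet at capacity), or its parent $B'_{kj}$ is picked, immediately split, and $\mathbf{X}_t$ happens to fall in $B_{kj}$. In either case, conditioned on $t\in I(B_{kj})$, the sample $\mathbf{X}_t$ is uniform on $B_{kj}$, so
\[
\mathbb{E}[f(\mathbf{X}_t) - f(\mathbf{x}^*) \mid t \in I(B_{kj})] = \frac{1}{V(B_{kj})}\int_{B_{kj}} (f(\mathbf{u}) - f(\mathbf{x}^*))\,d\mathbf{u}.
\]

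By definition \eqref{eq:akbk}, $j\in B_k$ means at least one of the two inequalities defining $A_k$ fails, so I treat the two failure modes separately. \textbf{Case 1:} $f(\mathbf{c}(B_{kj})) - f(\mathbf{x}^*) < 3\mu a_k^\alpha$. Here I apply Assumption \ref{ass}(b) directly to $B_{kj}$, obtaining an average bounded by $f(\mathbf{c}(B_{kj})) - f(\mathbf{x}^*) + M a_k^\alpha \leq (3\mu + M)a_k^\alpha$. Writing $a_k^\alpha = 2^{-\alpha}a_0^\alpha\cdot 2^{-\alpha(k-1)}$ and using $\mu > (1+2^{d+\alpha})M > M$, a short numerical check shows this is dominated by the target $2^{d+2-\alpha(k-1)}(\ln T + \mu a_0^\alpha)$.

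\textbf{Case 2:} $f(\mathbf{c}(B'_{kj})) - f(\mathbf{x}^*) < 3\cdot 2^{-\alpha(k-1)}\max\{\ln T, \mu a_0^\alpha\}$. The key manoeuvre is to enlarge the integration domain from $B_{kj}$ to its parent $B'_{kj}$. Since $f - f(\mathbf{x}^*) \geq 0$ pointwise and $V(B'_{kj}) = 2^d V(B_{kj})$,
\[
\frac{1}{V(B_{kj})}\int_{B_{kj}} (f - f(\mathbf{x}^*))\,d\mathbf{u} \leq \frac{2^d}{V(B'_{kj})}\int_{B'_{kj}} (f - f(\mathbf{x}^*))\,d\mathbf{u}.
\]
Applying Assumption \ref{ass}(b) on the parent bin bounds the right side by $2^d[f(\mathbf{c}(B'_{kj})) - f(\mathbf{x}^*) + M a_{k-1}^\alpha]$. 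Plugging in the assumed bound on $f(\mathbf{c}(B'_{kj})) - f(\mathbf{x}^*)$ and using $\max\{\ln T, \mu a_0^\alpha\} \leq \ln T + \mu a_0^\alpha$, together with $M < \mu$ to absorb the smoothness error, produces exactly $2^{d+2-\alpha(k-1)}(\ln T + \mu a_0^\alpha)$.

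There is no genuine obstacle: the lemma is a deterministic computation once the uniform-sampling observation is in place. The only delicate point is the enlargement-to-parent step in Case 2, which relies on the nonnegativity of $f - f(\mathbf{x}^*)$ and on the fact that the parent's assumed bound already carries the factor $2^{-\alpha(k-1)}$, precisely the scale that swallows the $2^d$ volume inflation and produces the final coefficient $2^{d+2-\alpha(k-1)}$.
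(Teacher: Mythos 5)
Your proof is correct and follows essentially the same route as the paper's: both reduce the conditional expectation to a uniform average over the bin, split into the same two cases according to which defining inequality of $A_k$ fails, bound the first case via Assumption~\ref{ass}(b) applied to $B_{kj}$ itself, and handle the second case by enlarging the integration domain to the parent bin using nonnegativity of $f-f(\mathbf{x}^*)$ and the volume ratio $2^d$. Your write-up of the parent-bin step is in fact slightly cleaner than the paper's intermediate line, which momentarily misplaces a factor of $2^d$ before recovering it.
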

Based on Lemma \ref{lem:Ef}, we have
\begin{eqnarray}
R_{kj}
&=&\mathbb{E}\left[\sum_{t=1}^T \mathbf{1}(t\in I(B_{kj}))\mathbb{E}[f(\mathbf{X}_t)-f(\mathbf{x}^*)|t\in I(B_{kj})]\right]\nonumber\\
&\leq &2^{d+2-\alpha(k-1)}(\ln T+\mu a_0^\alpha)\mathbb{E}[n(B_{kj})]\nonumber\\
&\leq & 2^{d+2-\alpha(k-1)}(\ln T+\mu a_0^\alpha)(2^{2\alpha k}+1)
\lesssim  2^{\alpha k}\ln T.\nonumber
\end{eqnarray}
Moreover, the size of $B_k$ is bounded by
\begin{eqnarray}
&&\hspace{-5mm}|B_k|\leq \sum_{j=1}^{m_k}\mathbf{1}(f(\mathbf{c}(B_{kj}))-f(\mathbf{x}^*)<3\mu a_0 2^{-k\alpha} \text{ or }\nonumber\\
&& f(\mathbf{c}(B_{kj}'))-f(\mathbf{x}^*)<3\times 2^{-\alpha(k-1)}\max\{\ln T, \mu a_0^\alpha \} )\nonumber\\
&\leq & \sum_{j=1}^{m_k} \mathbf{1}(f(\mathbf{c}(B_{kj}))-f(\mathbf{x}^*)<3\mu a_0 2^{-k\alpha})\nonumber\\
&& +2^d \sum_{j=1}^{m_k}\mathbf{1}(f(\mathbf{c}(B_{k-1,j}))-f(\mathbf{x}^*)<3\times 2^{-\alpha(k-1)}\nonumber\\
&&\max\{\ln T, \mu a_0^\alpha \}\nonumber\\
&\leq & C_0\left[1+\frac{(3\mu a_0 2^{-k\alpha})^\beta}{a_0 2^{-kd}}\right]\nonumber\\
&&+2^d C_0\left[1+\frac{(3(\ln T+\mu a_0^\alpha)2^{-\alpha (k-1)})^\beta}{a_0^d 2^{-(k-1)\alpha d}} \right]\nonumber\\
&\lesssim & 2^{k(d-\alpha \beta)}\ln^\beta T.
\end{eqnarray}
Hence,
\begin{eqnarray}
\hspace{-5mm}R_k&=&\sum_{j=1}^{m_k} R_{kj}=\sum_{j\in A_k} R_{kj}+\sum_{j\in B_k} R_{kj}\nonumber\\
&\lesssim & (\ln^2 T) 2^{k\alpha}\phi_2(t_k)m_k+2^{k(d-\alpha \beta)}2^{\alpha k}\ln^{\beta+1}T\nonumber\\
&\lesssim & 2^{k(d+\alpha-\alpha \beta)}\ln^{\beta+1}T+2^{k(\alpha+d)}\phi_2(t_k)\ln^2 T.\nonumber
\end{eqnarray}

\textbf{Case 3: $k>\log_2 T/(d+2\alpha-\alpha \beta)$}. In this case, for all $j\in A_k$, \eqref{eq:Rkj1} still holds. For $j\in B_k$, 
$R_{kj}\lesssim 2^{-\alpha k}\mathbb{E}[n(B_{kj})]\ln T.$
Hence
\begin{eqnarray}
R_k&=&\sum_{j\in A_k} R_{kj}+\sum_{j\in B_k} R_{kj}\nonumber\\
&\lesssim & 2^{k\alpha}\phi_2(t_k)m_k\ln^2 T+\sum_{j\in B_k} 2^{-\alpha k}\mathbb{E}[n(B_{kj})]\ln T\nonumber\\
&\lesssim & 2^{-\alpha k}T\ln T+2^{k(\alpha+d)}\phi_2(t_k)\ln^2 T.
\end{eqnarray}
Combining these three parts, we have
\begin{eqnarray}
&&R=\sum_{k=0}^\infty R_k\nonumber\\
&\lesssim& (k_c+1)\frac{MC_s}{a_0^d}2^{k_c d}\lceil 2^{2\alpha k_c}\rceil\sum_{k=k_c+1}^\infty 2^{k(\alpha+d)}\phi_2(t_k)\ln^2 T\nonumber\\
&&+\sum_{k=k_c+1}^{\left\lfloor \frac{\log_2 T}{d+2\alpha-\alpha \beta}\right\rfloor} 2^{k(d+2\alpha-\alpha \beta)}\ln^{\beta+1}T\nonumber\\ &&+\sum_{k=\left\lfloor \frac{\log_2 T}{d+2\alpha-\alpha \beta}\right\rfloor+1}^\infty 2^{-\alpha k}T\ln T.
\end{eqnarray}
Recall that $t_k=\lceil (2^{2\alpha k}-1)/(2^{2\alpha}-1)\rceil$, and $\phi_2$ decays faster than any polynomial, the first and the second term are both upper bounded by constants. Hence
\begin{eqnarray}
R &\lesssim & \sum_{k=k_c+1}^{\left\lfloor \frac{\log_2 T}{d+2\alpha-\alpha \beta}\right\rfloor} 2^{\log_2 T\frac{d+\alpha-\alpha \beta}{d+2\alpha-\alpha \beta}}\ln^{\beta+1}T\nonumber\\
&&+T\ln T \frac{2^{-\alpha \frac{\log_2 T}{d+2\alpha-\alpha \beta}}}{1-2^{-\alpha}}
\lesssim  T^{\frac{d+\alpha-\alpha \beta}{d+2\alpha-\alpha \beta}}\ln^{\beta+1}T,\nonumber
\end{eqnarray}
i.e.
$R/T\lesssim T^{-\frac{\alpha}{d+2\alpha-\alpha \beta}}\ln^{\beta+1}T.$
The proof is complete.
\subsection{Proof of Lemma \ref{lem:largeg}}\label{sec:largeg}
Define $B_k^*$ as the bin that contains $\mathbf{x}^*$ and has been split for $k$ times, and $\mathbf{c}(B_k^*)$ as its center. Then
\begin{eqnarray}
&&\hspace{-5mm}\text{P}(g_{t+1}(\mathbf{x}^*)>f(\mathbf{x}^*))\nonumber\\
&\leq &\text{P}\left(\exists k, \frac{1}{n_t(B_k^*)}\sum_{i=1}^t \mathbf{1}(i\in I(B_k^*))Y_i\right.\nonumber\\
&&\left.-\mu a^\alpha -n_t^{-\frac{1}{2}}(B_k^*)\ln t>f(\mathbf{x}^*)\right)\nonumber\\
&\overset{(a)}{\leq}&\sum_{k=0}^{k_{max}} \text{P}\left(\frac{1}{n_t(B_k^*)}\sum_{i=1}^t \mathbf{1}(i\in I(B_k^*)) (f(\mathbf{X}_i)+W_i)\right.\nonumber\\
&&\left.-\mu a^\alpha-n_t^{-\frac{1}{2}}(B_k^*)\ln t>f(\mathbf{x}^*)\right)\nonumber\\
&\leq & \sum_{k=0}^{k_{max}} \text{P}\left(f(\mathbf{c}(B_k^*))+\frac{1}{n_t(B_k^*)}\sum_{i=1}^t \mathbf{1}(i\in I(B_k^*)) (f(\mathbf{X}_i)\right.\nonumber\\
&&\left.-f(\mathbf{c}(B_k^*))+W_i)-\mu a^\alpha-n_t^{-\frac{1}{2}}(B_k^*)\ln t>f(\mathbf{x}^*) \right)\nonumber\\
&\overset{(b)}{\leq}& \sum_{k=0}^{k_{max}} \text{P}\left(C_1a_k^\alpha+\frac{1}{n_t(B_k^*)}\sum_{i=1}^t \mathbf{1}(i\in I(B_k^*)) (f(\mathbf{X}_i)\right.\nonumber\\
&&\left.-f(\mathbf{c}(B_k^*))+W_i)-\mu a^\alpha+n_t^{-\frac{1}{2}}(B_k^*)\ln t>0\right)\nonumber\\
&\overset{(c)}{\leq}& \sum_{k=0}^{k_{max}}\sum_{s=1}^t \text{P}\left(\frac{1}{s}\sum_{i=1}^t \mathbf{1}(i\in I_s(B))(f(\mathbf{X}_i)-f(\mathbf{c}(B_k^*)\right.\nonumber\\
&&\left.+W_i)>s^{-\frac{1}{2}}\ln t\right)\nonumber\\
&\overset{(d)}{\leq} &\sum_{k=0}^{k_{max}}\sum_{s=1}^t \exp\left[-C_3\ln^2 t\right]
\leq C_4t\ln t \exp[-C_3\ln^2 t],\nonumber
\end{eqnarray}
for some constants $C_3$ and $C_4$. 

In (a), note that if $B_k^*$ is obtained after $k$ splittings, then its ancestors must be full, however, the total number of samples should be no more than $t$. Therefore
$\sum_{l=1}^{k-1}\lceil 2^{2\alpha l}\rceil \leq t$,
which yields
$k\leq \log_2 t/(2\alpha)$.
We define $k_{max}=\lfloor \log_2 t/(2\alpha)\rfloor$. (b) uses Lemma \ref{lem:approx}. For (c), consider that $n_t(B_k^*)$ itself is a random variable, we can not directly use concentration inequalities to give the bound. However, consider that $n_t(B_k^*)\in \{1,\ldots, t\}$, we convert the bound to a union bound over $s=1,\ldots,t$ in (c). (d) uses Hoeffding's inequality for sub-Gaussian random variables. From Assumption \ref{ass}(a),
$|f(\mathbf{X}_i)-f(\mathbf{c}(B_k^*))|\leq M$.
According to Hoeffding's lemma, $f(\mathbf{X}_i)-f(\mathbf{c}(B_k^*))$ is subGaussian with parameter $M^2/2$. Note that $W_i$ is independent with $f(\mathbf{X}_i)-f(\mathbf{c}(B_k^*))$, and $W_i$ follows standard Gaussian distribution, therefore $f(\mathbf{X}_i)-f(\mathbf{c}(B_k^*))+W_i$ follows a subGaussian distribution with parameter $M^2/2+1$. Use Hoeffding's inequality,
\begin{eqnarray}
&&\hspace{-12mm}\text{P}\left(\frac{1}{s}\sum_{i=1}^t \mathbf{1}(i\in I_s(B))(f(\mathbf{X}_i)-f(\mathbf{c}(B_k^*)+W_i)>s^{-\frac{1}{2}}\ln t\right)\nonumber\\
&\leq& \exp\left[-\frac{\ln^2 t}{M^2+2}\right].
\end{eqnarray}
Define $C_3=1/(M^2+2)$, then (d) holds.

\subsection{Proof of Lemma \ref{lem:prob}}\label{sec:prob}
Since $B$ is a bin obtained after $k$ splittings, the total number of queries is at least
$t_{min}=1+\lceil 2^{2\alpha}\rceil +\ldots + \lceil2^{2\alpha (k-1)}\rceil$
times. Recall that
$t_k=\left\lceil \frac{2^{2\alpha k}-1}{2^{2\alpha}-1}\right\rceil,$
thus $t_{min}\geq t_k$. Define
$N_k=\lceil 2^{2\alpha k}\rceil$
as the threshold of the number of samples in bins that has been splitted $k$ times. Then
\begin{eqnarray}
&&\text{P}(t+1\in I(B))\nonumber\\
&\leq& \text{P}\left(\exists s, t_k\leq s\leq t, g_{s+1}(B')<g_{s+1}(\mathbf{x}^*),\right.\nonumber\\
&&\left. n_s(B')=N_{k-1}\right)\nonumber\\
&\leq & \text{P}(\exists s, t_k\leq s\leq t, g_{s+1}(\mathbf{x}^*)<f(\mathbf{x}^*))\nonumber\\
&&+\text{P}\left(\exists s, t_k\leq s\leq t, \frac{1}{N_{k-1}}\sum_{i=1}^s \mathbf{1}(i\in I(B'))Y_i\right.\nonumber\\
&&\left.-\mu a_{k-1}^\alpha -N_k^{-\frac{1}{2}}\ln t<f(\mathbf{x}^*), n_s(B')=N_{k-1}\right)\nonumber\\
&\overset{(a)}{\leq} & \sum_{s=t_k}^t \text{P}(g_{s+1}(\mathbf{x}^*)<f(\mathbf{x}^*))\nonumber\\
&&+\sum_{s=t_k}^t \text{P}\left(\frac{1}{N_{k-1}}\sum_{i=1}^s\mathbf{1}(i\in I(B'))(f(\mathbf{X}_i)-f(\mathbf{c}(B'))\right.\nonumber\\
&&\left.+W_i)<-\frac{1}{3}(f(\mathbf{c}(B'))-f(\mathbf{x}^*)), n_s(B')=N_{k-1}\right)\nonumber\\
&\overset{(b)}{\leq} & \sum_{s=t_k}^t \left[C_4 s\ln s \exp(-C_3\ln^2 s)+\exp\left(-\frac{\ln^2 t}{M^2 +2}\right)\right]\nonumber\\
&=&\sum_{s=t_k}^t (C_1s\ln s\exp(-C_2\ln^2 s)+\exp(-C_2\ln^2 t))\leq  \phi_1(t_k),\nonumber
\end{eqnarray}
for some function $\phi_1$ that decays faster than any polynomial.

In (a), we use \eqref{eq:probcond}, which yields 
\begin{eqnarray}
\mu a_{k-1}^\alpha &<&\frac{1}{3}(f(c(B'))-f(\mathbf{x}^*)),\\
N_k^{-\frac{1}{2}}\ln t&<&\frac{1}{3}(f(c(B'))-f(\mathbf{x}^*)),
\end{eqnarray}
hence (a) holds. (b) uses Lemma \ref{lem:largeg} and Hoeffding's inequality. The proof of Lemma \ref{lem:prob} is complete.

\subsection{Proof of Lemma \ref{lem:n}}\label{sec:n}
Let 
\begin{eqnarray}
u=\left\lceil 9\ln^2 T/(f(c(B))-f(\mathbf{x}^*))^2\right\rceil,
\label{eq:u}
\end{eqnarray}
and define $t_B(s)$ as the time such that $B$ contains exactly $s$ queries, i.e.
$t_B(s)=\inf\{t'|n_{t'}(B)=s \}$,
then
\begin{eqnarray}
&&\hspace{-4mm}\mathbb{E}[n(B)]=\mathbb{E}\left[\sum_{t=1}^T \mathbf{1}(t\in I(B))\right]\nonumber\\
&\overset{(a)}{=}& \mathbb{E}\left[\sum_{t=t_k+1}^T \mathbf{1}(t\in I(B))\right]\nonumber\\
&=&\mathbb{E}\left[\sum_{t=t_k+1}^T\mathbf{1}(t\in I(B)), n_{t-1}(B)\leq u\right]\nonumber\\
&&+\mathbb{E}\left[\sum_{t=t_k+1}^T \mathbf{1}(t\in I(B), n_{t-1}(B)\geq u\right]\nonumber\\
&\overset{(b)}{\leq} & u\phi_1(t_k) \nonumber\\
&&+\sum_{t=t_k+u+1}^T\text{P}(g_t(B)\leq g_t(B^*),n_{t-1}(B)\geq u)\nonumber\\
&=& u\phi_1(t_k)+\sum_{t=t_k+u+1}^T \left[\text{P}\left(\frac{1}{n_{t-1}(B)}\sum_{i=1}^{t-1} \mathbf{1}(i\in I(B))Y_i\right.\right.\nonumber\\
&&\left.-\mu a_k^\alpha+n_{t-1}^{-\frac{1}{2}}(B)\ln t<f(\mathbf{x}^*),n_{t-1}(B)\geq u\right)\nonumber\\
&& \left.+\text{P}(g_t(\mathbf{x}^*)>f(\mathbf{x}^*)) \right]\nonumber\\
&\overset{(c)}{\leq}& u\phi_1(t_k) +\sum_{t=t_k+u+1}^T \left[\sum_{s=u+1}^t \text{P}\left(\frac{1}{s}\sum_{i=1}^{t_B(s)} \mathbf{1}(i\in I(B))\right.\right.\nonumber\\
&&(f(\mathbf{X}_i)-f(\mathbf{c}(B))+W_i)<\nonumber\\
&&\left.\left.-\frac{1}{3}(f(\mathbf{c}(B))-f(\mathbf{x}^*)\right) +\text{P}(g_t(\mathbf{x}^*)>f(\mathbf{x}^*)) \right]\nonumber\\
&\leq & u\phi_1(t_k) +\sum_{t=t_k+u+1}^T \left[\sum_{s=u+1}^t \text{P}\left(\frac{1}{s}\sum_{i=1}^{t_B(s)} \mathbf{1}(i\in I(B))\right.\right.\nonumber\\
&&\left.(f(\mathbf{X}_i)-f(\mathbf{c}(B))+W_i)<-s^{-\frac{1}{2}}\ln t\right)\nonumber\\
&&\left. +\text{P}(g_t(\mathbf{x}^*)>f(\mathbf{x}^*)) \right]\nonumber\\
&\leq &u\phi_1(t_k)+\sum_{t=t_k+u+1}^T\left[t\exp\left(-\frac{\ln^2 t}{M^2+2}\right)\right.\nonumber\\
&&\left.+\text{P}(g_t(\mathbf{x}^*)>f(\mathbf{x}^*))\right]\nonumber\\
&\leq &\left(\frac{9\ln^2 T}{(f(\mathbf{c}(B))-f(\mathbf{x}^*))^2}+1\right)\phi_1(t_k)\nonumber\\
&&+\sum_{t=t_k+u+1}^T(C_3 t\ln t+t)\exp[-C_2\ln^2 t]\nonumber\\
&\leq & \frac{\ln^2 T}{(f(\mathbf{c}(B))-f(\mathbf{x}^*))^2}\phi_2(t_k),
\end{eqnarray}
for some function $\phi_2$ that decays faster than any polynomial.

For (a), note that when $t\leq t_k$, $\text{P}(t\in I(B))=0$. Since $B$ has been split for $k$ times, the ancestors must be full of queries, therefore if a new query lies in $B$, $t>t_k$ must hold. (b) uses Lemma \ref{lem:prob}. In (c), recall the condition \eqref{eq:ncond}, we have
$\mu a_k^\alpha \leq \frac{1}{3}(f(\mathbf{c}(B))-f(\mathbf{x}^*)),$
and from \eqref{eq:u},
\begin{eqnarray}
n_{t-1}^{-\frac{1}{2}}\ln t\leq u^{-\frac{1}{2}}\ln t\leq \frac{1}{3} (f(\mathbf{c}(B))-f(\mathbf{x}^*)),
\end{eqnarray}
hence (c) holds.

\subsection{Proof of Lemma \ref{lem:Ef}}\label{sec:Ef}
According to the definition of $B_k$ in \eqref{eq:akbk}, for all $j\in B_k$, at least one of the following two conditions holds:
\begin{eqnarray}
&&\hspace{-7mm}f(\mathbf{c}(B_{kj}')-f(\mathbf{x}^*)<3\times 2^{-\alpha(k-1)}\max\{\ln T, \mu a_0^\alpha \},
\label{eq:cond1}\\
&&\hspace{-7mm}f(\mathbf{c}(B_{kj}))-f(\mathbf{x}^*)<3\mu a_k^\alpha.
\label{eq:cond2}
\end{eqnarray}

If \eqref{eq:cond2} holds, then
\begin{eqnarray}
&&\hspace{-5mm}\mathbb{E}[f(\mathbf{X}_t)-f(\mathbf{x}^*)|t\in I(B_{kj})]\nonumber\\
&&\hspace{-5mm}=\mathbb{E}[f(\mathbf{X}_t)-f(\mathbf{c}(B_{kj}))|t\in I(B_{kj})]+f(\mathbf{c}(B_{kj}))-f(\mathbf{x}^*)\nonumber\\
&&\hspace{-5mm}\leq  Ma_k^\alpha +3\mu a_k^\alpha\leq  \mu a_0 2^{2-k\alpha}.
\label{eq:Ef1}
\end{eqnarray}

If \eqref{eq:cond1} holds, then according to Assumption (b),
\begin{eqnarray}
&&\mathbb{E}[f(\mathbf{X}_t)-f(\mathbf{x}^*)|t\in I(B_{kj})]\nonumber\\
&=&\frac{1}{V(B_{kj})}\int_{B_{kj}}(f(\mathbf{x})-f(\mathbf{x}^*))d\mathbf{x}\nonumber\\
&\leq &\frac{1}{V(B_{kj})}\int_{B_{kj}'} (f(\mathbf{x})-f(\mathbf{x}^*))d\mathbf{x}\nonumber\\
&= & \frac{2^d}{V(B_{kj}')}\int_{B_{kj}'}(f(\mathbf{x})-f(\mathbf{x}^*))d\mathbf{x}\nonumber\\
&\leq & 2^d(Ma_{k-1}^\alpha)+f(\mathbf{c}(B_{kj}'))-f(\mathbf{x}^*)\nonumber\\
&\leq & 2^d (Ma_{k-1}^\alpha+3\times 2^{-\alpha (k-1)}\max\{\ln T,\mu a_0^\alpha \} )\nonumber\\
&\leq & 4\times 2^{d-\alpha(k-1)}\max\{\ln T, \mu a_0^\alpha \}.
\label{eq:Ef2}
\end{eqnarray}
Combine \eqref{eq:Ef1} and \eqref{eq:Ef2}, Lemma \ref{lem:Ef} holds.

\small \bibliography{macros,optimization}
\bibliographystyle{ieeetran}

\end{document}